\newtheorem{theorem}{Theorem}
\newtheorem{lemma}{Lemma}
\newtheorem{corollary}{Corollary}
\newtheorem{definition}{Definition}
\newtheorem{assumption}{Assumption}
\newcommand{\reals}{\mathbb{R}}
\newcommand{\bN}{\mathbb{N}}
\newcommand{\E}{\mathbb{E}}
\newcommand{\circpar}[1]{\left(#1\right)}
\newcommand{\be}{\mathbf{e}}
\newcommand{\bw}{\mathbf{w}}
\newcommand{\bzeta}{\boldsymbol\zeta}
\newcommand{\bg}{\mathbf{g}}
\newcommand{\bb}{\mathbf{b}}
\newcommand{\bz}{\mathbf{z}}
\newcommand{\bsigma}{\boldsymbol{\sigma}}
\newcommand{\Ocal}{\mathcal{O}}
\newcommand{\Wcal}{\mathcal{W}}
\newcommand{\norm}[1]{\left\|#1\right\|}
\newcommand{\inner}[1]{\langle#1\rangle}
\newcommand{\secref}[1]{Sec.~\ref{#1}}
\renewcommand{\eqref}[1]{Eq.~(\ref{#1})}
\newcommand{\lemref}[1]{Lemma~\ref{#1}}
\newcommand{\thmref}[1]{Thm.~\ref{#1}}
\newcommand{\algspan}{\operatorname{span}}
\newcommand{\absval}[1]{\left|#1\right|}
\newcommand{\eqdef}{\mathrel{\mathop:}=}
\DeclareMathOperator{\conv}{conv}
\numberwithin{fact_counter}{part}
\title{Communication Complexity of Distributed\\ Convex Learning and Optimization}
\author{
Yossi Arjevani \\
Weizmann Institute of Science\\
Rehovot 7610001, Israel \\
\texttt{yossi.arjevani@weizmann.ac.il} \\
\and
Ohad Shamir\\
Weizmann Institute of Science\\
Rehovot 7610001, Israel \\
\texttt{ohad.shamir@weizmann.ac.il} \\
}	
\date{}
\begin{document}

\maketitle

\begin{abstract}
We study the fundamental limits to communication-efficient distributed
methods for convex learning and optimization, under different assumptions
on the information available to individual machines, and the types of
functions considered. We identify cases where existing algorithms are
already worst-case optimal, as well as cases where room for
further improvement is still possible. Among other things, our results
indicate that without similarity between the local objective functions (due to statistical data similarity or otherwise) many communication rounds may be required, even if the machines have unbounded computational power.
\end{abstract}

\section{Introduction}

We consider the problem of distributed convex learning and optimization,
where a set of $m$ machines, each with access to a different local convex
function $F_i:\reals^d\mapsto \reals$ and a convex domain $\Wcal\subseteq
\reals^d$, attempt to solve the optimization problem
\begin{equation}\label{eq:kavg}
\min_{\bw\in\Wcal} F(\bw)~~~\text{where}~~~ F(\bw)=\frac{1}{m}\sum_{i=1}^{m}F_i(\bw).
\end{equation}
A prominent application is empirical risk minimization, where the goal is to
minimize the average loss over some dataset, where each machine has access to
a different subset of the data. Letting $\{\bz_1,\ldots,\bz_N\}$ be the dataset
composed of $N$ examples, and assuming the loss function $\ell(\bw,\bz)$ is
convex in $\bw$, then the empirical risk minimization problem
$\min_{\bw\in\Wcal} \frac{1}{N}\sum_{i=1}^{N} \ell(\bw,\bz_i)$ can be written
as in \eqref{eq:kavg}, where $F_i(\bw)$ is the average loss over machine $i$'s examples.

The main challenge in solving such problems is that communication between the
different machines is usually slow and constrained, at least compared to the
speed of local processing. On the other hand, the datasets
involved in distributed learning are usually large and high-dimensional. Therefore, machines cannot
simply communicate their entire data to each other, and the question is how
well can we solve problems such as \eqref{eq:kavg} using as little
communication as possible.

As datasets continue to increase in size, and parallel computing platforms
becoming more and more common (from multiple cores on a single CPU to
large-scale and geographically distributed computing grids), distributed
learning and optimization methods have been the focus of
much research in recent years, with just a few examples including
\cite{zhang2013communication,bekkerman2011scaling,balcandistributed,ZWSL11,AgChDuLa11,BoPaChPeEc11,MaKeSyBo13,Ya13,ReReWrNi11,RiTa13,DeGiShXi12,CoShSrSr11,DuAgWa12,jaggi2014communication,shamir2014communication,shamir2014distributed,balcan2014improved,zhang2015communication}.
Most of this work studied algorithms for this problem, which provide upper
bounds on the required time and communication complexity.

In this paper, we take the opposite direction, and study what are the
fundamental performance limitations in solving \eqref{eq:kavg}, under several
different sets of assumptions. We identify cases where existing algorithms
are already optimal (at least in the worst-case), as well as cases where
room for further improvement is still possible.

Since a major constraint in distributed learning is communication, we focus
on studying the amount of communication required to optimize \eqref{eq:kavg}
up to some desired accuracy $\epsilon$. More precisely, we consider the
number of \emph{communication rounds} that are required, where in each
communication round the machines can generally broadcast to each other
information linear in the problem's dimension $d$ (e.g. a point in $\Wcal$ or
a gradient). This applies to virtually all algorithms for large-scale
learning we are aware of, where sending vectors and gradients is feasible, but computing and
sending larger objects, such as Hessians ($d\times d$ matrices) is not.

Our results pertain to several possible settings (see
\secref{sec:preliminaries} for precise definitions). First, we distinguish
between the local functions being merely convex or strongly-convex, and
whether they are smooth or not. These distinctions are standard in studying
optimization algorithms for learning, and capture important properties such
as the regularization and the type of loss function used. Second, we
distinguish between a setting where the local functions are related -- e.g.,
because they reflect statistical similarities in the data residing at different machines -- and a setting where no relationship is assumed. 
For example, in the extreme case where data was split uniformly at random between machines, one can show that quantities such as the values, gradients
and Hessians of the local functions differ only by $\delta=\Ocal(1/\sqrt{n})$,
where $n$ is the sample size per machine, due to concentration of measure effects. Such similarities can be used to speed up
the optimization/learning process, as was done in e.g.
\cite{shamir2014communication,zhang2015communication}. Both the $\delta$-related and the unrelated
setting can be considered in a unified way, by letting $\delta$ be a parameter
and studying the attainable lower bounds as a function of $\delta$.
Our results can be summarized as follows:
\begin{itemize}[leftmargin=*]
  \item First, we define a mild structural assumption on the algorithm
      (which is satisfied by reasonable approaches we are aware of), which
      allows us to provide the lower bounds described below on the number
      of communication rounds required to reach a given suboptimality
      $\epsilon$.
  \begin{itemize}[leftmargin=*]
    \item When the local functions can be unrelated, we prove a lower
        bound of $\Omega(\sqrt{1/\lambda}\log(1/\epsilon))$ for smooth
        and $\lambda$-strongly convex functions, and
        $\Omega(\sqrt{1/\epsilon})$ for smooth convex functions. These
        lower bounds are matched by a straightforward distributed
        implementation of accelerated gradient descent. In particular,
        the results imply that many communication rounds may be required
        to get a high-accuracy solution, and moreover, that no algorithm
        satisfying our structural assumption would be better, even if we
        endow the local machines with unbounded computational power. For
        non-smooth functions, we show a lower bound of
        $\Omega(\sqrt{1/\lambda \epsilon})$ for $\lambda$-strongly convex
        functions, and $\Omega(1/\epsilon)$ for general convex functions. Although
        we leave a full derivation to future work, it seems these lower bounds can be matched in our framework
        by an algorithm combining acceleration and Moreau proximal smoothing of the local functions.
    \item When the local functions are related (as quantified by the parameter $\delta$), we prove a communication round lower bound of $\Omega(\sqrt{\delta/\lambda}\log(1/\epsilon))$ for smooth and $\lambda$-strongly convex functions. For quadratics, this bound is matched by (up to constants and logarithmic factors)
        by the recently-proposed DISCO algorithm
        \cite{zhang2015communication}. However, getting an optimal algorithm for general strongly convex and smooth functions in the $\delta$-related setting, let alone for non-smooth or non-strongly convex functions, remains open.
  \end{itemize}
  \item We also study the attainable performance without posing any
      structural assumptions on the algorithm, but in the more restricted
      case where only a single round of communication is allowed. We
      prove that in a broad regime, the performance of any distributed algorithm may be no better than a `trivial' algorithm which returns the minimizer of one of the local functions, as long as the number of bits communicated is less than $\Omega(d^2)$.
      Therefore, in our setting, no communication-efficient 1-round distributed algorithm can provide non-trivial performance in the worst case.
\end{itemize}

\subsection*{Related Work}

There have been several previous works which considered lower bounds in the
context of distributed learning and optimization, but to the best of our
knowledge, none of them provide a similar type of results. Perhaps the most
closely-related paper is \cite{tsilu87}, which studied the communication
complexity of distributed optimization, and showed that
$\Omega(d\log(1/\epsilon))$ bits of communication are necessary between the
machines, for $d$-dimensional convex problems. However, in our setting this
does not lead to any non-trivial lower bound on the number of communication
rounds (indeed, just specifying a $d$-dimensional vector up to accuracy
$\epsilon$ required $\Ocal(d\log(1/\epsilon))$ bits). More recently,
\cite{balcandistributed} considered lower bounds for certain types of
distributed learning problems, but not convex ones in an agnostic
distribution-free framework. In the context of lower bounds for one-round
algorithms, the results of \cite{clwo09} imply that $\Omega(d^2)$ bits of
communication are required to solve linear regression in one round of
communication. However, that paper assumes a different model than ours, where
the function to be optimized is not split among the machines as in
\eqref{eq:kavg}, where each $F_i$ is convex. Moreover, issues such as strong
convexity and smoothness are not considered. \cite{shamir2014communication}
proves an impossibility result for a one-round distributed learning scheme,
even when the local functions are not merely related, but actually result
from splitting data uniformly at random between machines. On the flip side,
that result is for a particular algorithm, and doesn't apply to any possible
method.

Finally, we emphasize that distributed learning and optimization can be
studied under many settings, including ones different than those studied
here. For example, one can consider distributed learning on a
stream of i.i.d. data
\cite{shamir2014distributed,CoShSrSr11,frostig2014competing,DeGiShXi12}, or
settings where the computing architecture is different, e.g. where the
machines have a shared memory, or the function to be optimized is not split as in \eqref{eq:kavg}. Studying lower bounds in
such settings is an interesting topic for future work.

\section{Notation and Framework}\label{sec:preliminaries}

The only vector and matrix norms used in this paper are the Euclidean norm and
the spectral norm, respectively. $\be_j$ denotes the $j$-th standard unit vector. We let $\nabla G(\bw)$ and $\nabla^2 G(\bw)$
denote the gradient and Hessians of a function $G$ at $\bw$, if they exist. $G$ is smooth (with
parameter $L$) if it is differentiable and the gradient is
$L$-Lipschitz. In particular, if $\bw^*=\arg\min_{\bw\in\Wcal}G(\bw)$, then $G(\bw)-G(\bw^*)\leq \frac{L}{2}\norm{\bw-\bw^*}^2$. $G$ is strongly convex (with parameter $\lambda$) if for any
$\bw,\bw'\in \Wcal$, $G(\bw')\geq
G(\bw)+\inner{\bg,\bw'-\bw}+\frac{\lambda}{2}\norm{\bw'-\bw}^2$ where $\bg\in
\partial G(\bw')$ is a subgradient of $G$ at $\bw$. In particular, if
$\bw^*=\arg\min_{\bw\in\Wcal}G(\bw)$, then $G(\bw)-G(\bw^*)\geq
\frac{\lambda}{2}\norm{\bw-\bw^*}^2$. Any convex function is also
strongly-convex with $\lambda=0$. A special case of smooth convex functions
are quadratics, where $G(\bw)=\bw^\top A \bw+\mathbf{b}^\top \bw+c$ for some
positive semidefinite matrix $A$, vector $\mathbf{b}$ and scalar $c$. In this
case, $\lambda$ and $L$ correspond to the smallest and largest eigenvalues of
$A$.

We model the distributed learning algorithm as an iterative process, where in each round the machines may perform some local computations, followed by a
communication round where each machine broadcasts a message to all other
machines. We make no assumptions on the computational complexity of the local computations. After all communication rounds are completed, a designated
machine provides the algorithm's output (possibly after additional local
computation).

Clearly, without any assumptions on the number of bits communicated, the
problem can be trivially solved in one round of communication (e.g. each
machine communicates the function $F_i$ to the designated machine, which then
solves \eqref{eq:kavg}. However, in
practical large-scale scenarios, this is non-feasible, and the size of each
message (measured by the number of bits) is typically on the order of
$\tilde{\Ocal}(d)$, enough to send a $d$-dimensional real-valued vector\footnote{The
$\tilde{\Ocal}$ hides constants and factors logarithmic in the required
accuracy of the solution. The idea is that we can represent real numbers up
to some arbitrarily high machine precision, enough so that finite-precision
issues are not a problem.}, such as points in the optimization domain or
gradients, but not larger objects such as $d\times d$ Hessians.

In this model, our main question is the following: How many rounds of
communication are \emph{necessary} in order to solve problems such as
\eqref{eq:kavg} to some given accuracy $\epsilon$?

As discussed in the introduction, we first need to distinguish between
different assumptions on the possible relation between the local functions.
One natural situation is when no significant relationship can be assumed, for instance
when the data is arbitrarily split or is gathered by each machine from 
statistically dissimilar sources. We denote this as the
\emph{unrelated} setting. However, this assumption is often unnecessarily
pessimistic. Often the data allocation process is more random, or we can assume that the different data sources for each machine have statistical similarities (to give a simple example,
consider learning from users' activity across a geographically distributed computing grid, each servicing its own local population).
We will capture such similarities, in the context of quadratic functions, using the following definition:
\begin{definition}\label{def:related}
We say that a set of quadratic functions
\begin{align*}
	F_i(\bw)&\eqdef\bw^\top A_i \bw + \bb_i \bw + c_i,\qquad A_i\in\reals^{d\times d},~ \bb_i\in\reals^d,~ c_i\in\reals
\end{align*}
are \emph{$\delta$-related}, if for any $i,j\in \{1\ldots k\}$, it holds that
\begin{align*}
	\norm{A_i-A_j} \le \delta,~\norm{\bb_i-\bb_j}\le \delta,~\absval{c_i-c_j}\le \delta
\end{align*}
\end{definition}
For example, in the context of linear regression with the squared loss over a
bounded subset of $\reals^d$, and assuming $mn$ data points with bounded norm
are randomly and equally split among $m$ machines, it can be shown that the
conditions above hold with $\delta=\Ocal(1/\sqrt{n})$
\cite{shamir2014communication}. The choice of $\delta$ provides us with a
spectrum of learning problems ranked by difficulty: When $\delta=\Omega(1)$,
this generally corresponds to the unrelated setting discussed earlier.
When $\delta=\Ocal(1/\sqrt{n})$, we get the situation typical of
randomly partitioned data. When $\delta=0$, then all the local functions have
essentially the same minimizers, in which case \eqref{eq:kavg} can be
trivially solved with zero communication, just by letting one machine
optimize its own local function. We note that although Definition
\ref{def:related} can be generalized to non-quadratic functions, we do not
need it for the results presented here.

We end this section with an important remark. In this paper, we prove lower
bounds for the $\delta$-related setting, which includes as a special case the commonly-studied setting of randomly partitioned data (in which case $\delta=\Ocal(1/\sqrt{n})$). However, our bounds \emph{do not apply} for random partitioning, since they use $\delta$-related constructions which do not correspond to randomly partitioned data. In fact, very recent work \cite{lee2015distributed} has cleverly shown that for randomly partitioned data, and for certain reasonable regimes of strong convexity and smoothness, it is actually possible to get better performance than what is indicated by our lower bounds. However, this encouraging result crucially relies on the random partition property, and in parameter regimes which limit how much each data point needs to be ``touched'', hence preserving key statistical independence properties. We suspect that it may be difficult to improve on our lower bounds under substantially weaker assumptions.

%


\section{Lower Bounds Using a Structural
Assumption}\label{sec:multiple}

In this section, we present lower bounds on the number of communication
rounds, where we impose a certain mild structural assumption on the
operations performed by the algorithm. Roughly speaking, our lower bounds
pertain to a very large class of algorithms, which are based on linear
operations involving points, gradients, and vector products with local
Hessians and their inverses, as well as solving local optimization problems
involving such quantities. At each communication round, the machines can
share any of the vectors they have computed so far. Formally, we consider
algorithms which satisfy the assumption stated below. For convenience, we
state it for smooth functions (which are differentiable) and discuss the case
of non-smooth functions in \secref{sec:nonsmooth}.
\begin{assumption} \label{assump:dyn}
For each machine $j$, define a set $W_j\subset \reals^d$, initially
$W_j=\{\mathbf{0}\}$. Between communication rounds, each machine $j$
iteratively computes and adds to $W_j$ some finite number of points $\bw$,
each satisfying
\begin{align}
\gamma\bw &+ \nu \nabla F_j(\bw) \in  \algspan\Big\{\bw'~,~\nabla F_j(\bw')~,~(\nabla^2 F_j(\bw')+D)\bw''~,~(\nabla^2 F_j(\bw')+D)^{-1}\bw'' ~\Big|~\nonumber\\
&
	\bw',\bw'' \in W_j~~,~~ D~\text{diagonal}~~,~~\nabla^2 F_j(\bw')~\text{exists}~~,~~(\nabla^2 F_j(\bw')+D)^{-1}~\text{exists}\Big\}.
	\label{eq:assumpdyn}
\end{align}
for some $\gamma,\nu\geq 0$ such that $\gamma+\nu> 0$. After every communication round, let $W_j := \cup_{i=1}^{m}W_i$ for all $j$.
The algorithm's final output (provided by the designated machine $j$) is a
point in the span of $W_j$.
%
%
%
\end{assumption}
This assumption requires several remarks:
\begin{itemize}[leftmargin=*]
\item Note that $W_j$ is not an explicit part of the algorithm: It simply
    includes all points computed by machine $j$ so far, or communicated to
    it by other machines, and is used to define the set of new points which
    the machine is allowed to compute.
\item The assumption bears some resemblance -- but is far weaker -- than
    standard assumptions used to provide lower bounds for iterative
    optimization algorithms. For example, a common assumption (see
    \cite{nesterov2004introductory}) is that each computed point $\bw$ must
    lie in the span of the previous gradients. This corresponds to a
    special case of Assumption \ref{assump:dyn}, where $\gamma=1,\nu=0$,
    and the span is only over gradients of previously computed points. Moreover, it also allows (for
    instance) exact optimization of each local function, which is a
    subroutine in some distributed algorithms (e.g.
    \cite{ZWSL11,zhang2013communication}), by setting $\gamma=0,\nu=1$ and
    computing a point $\bw$ satisfying $\gamma \bw+\nu \nabla
    F_j(\bw)=\mathbf{0}$. By allowing the span to include previous
    gradients, we also incorporate algorithms which perform optimization
    of the local function plus terms involving previous gradients and
    points, such as \cite{shamir2014communication}, as well as algorithms
    which rely on local Hessian information and preconditioning, such as
    \cite{zhang2015communication}. In summary, the assumption is satisfied by most techniques for black-box convex optimization that we are aware of. Finally, we emphasize that we do not restrict the number or computational complexity of the operations performed between communication rounds.
\item The requirement that $\gamma,\nu\geq 0$ is to exclude algorithms
    which solve non-convex local optimization problems of the form
    $\min_{\bw} F_j(\bw)+\gamma\norm{\bw}^2$ with $\gamma<0$, which
    are unreasonable in practice and can sometimes break our lower bounds.
\item The assumption that $W_j$ is initially $\{\mathbf{0}\}$ (namely, that the
    algorithm starts from the origin) is purely for convenience, and our
    results can be easily adapted to any other starting point by shifting
    all functions accordingly.
\end{itemize}

The techniques we employ in this section are
inspired by lower bounds on the iteration complexity of first-order methods for standard
(non-distributed) optimization (see for example \cite{nesterov2004introductory}). These are based on the construction of `hard'
functions, where each gradient (or subgradient) computation can only provide
a small improvement in the objective value. In our setting, the dynamics are
roughly similar, but the necessity of many gradient computations is replaced
by many communication rounds. This is achieved by constructing suitable local
functions, where at any time point no individual machine can `progress' on
its own, without information from other machines.

\subsection{Smooth Local Functions}\label{sec:smooth}

We begin by presenting a lower bound when the local functions $F_i$ are
strongly-convex and smooth:

\begin{theorem} \label{thm:smooth_lb}
For any even number $m$ of machines, any distributed algorithm which
satisfies Assumption \ref{assump:dyn}, and for any $\lambda\in[0,1),
\delta\in(0,1)$, there exist $m$ local quadratic functions over $\reals^d$
(where $d$ is sufficiently large) which are $1$-smooth, $\lambda$-strongly
convex, and $\delta$-related, such that if $\bw^*=\arg\min_{\bw\in \reals^d} F(\bw)$, then the number of
communication rounds required to obtain $\hat{\bw}$ satisfying
$F(\hat{\bw})-F(\bw^*)\leq \epsilon$ (for any $\epsilon>0$) is at
least
\begin{align*}
    \frac{1}{4}\left(\sqrt{1+\delta\left(\frac{1}{\lambda}-1\right)}-1\right)\log\left(\frac{\lambda\norm{\bw^*}^2}{4\epsilon}\right)-\frac{1}{2}
    ~=~ \Omega\left(\sqrt{\frac{\delta}{\lambda}}\log\left(\frac{\lambda\norm{\bw^*}^2}{\epsilon}\right)\right)
\end{align*}
if $\lambda>0$, and at least $\sqrt{\frac{3\delta}{32 \epsilon }
}\norm{\bw^*}-2$ if $\lambda=0$.
\end{theorem}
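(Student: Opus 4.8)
The plan is to adapt the classical ``chain-like'' hard instances used for first-order lower bounds (cf. \cite{nesterov2004introductory}) to the distributed setting, so that the role played there by successive gradient evaluations is here played by successive communication rounds. I would construct the $m$ local functions in two interleaved groups (using that $m$ is even): each $F_i(\bw)=\half\bw^\top A_i\bw-\bb_i^\top\bw$ is a quadratic whose Hessian $A_i$ is \emph{block-diagonal} with $2\times 2$ blocks supported on complementary coordinate pairs -- group one on $(1,2),(3,4),\dots$ and group two on $(2,3),(4,5),\dots$ -- chosen so that the averaged Hessian $\bar A=\frac1m\sum_i A_i$ equals the standard tridiagonal hard matrix, but with its off-diagonal coupling scaled down to order $\delta$. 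The common diagonal part is taken of order $\lambda$, and the only nonzero linear term is $\bb_1\propto\be_1$ on group one, with $\norm{\bb_1}\le\delta$. A short calculation then fixes the scalars so that simultaneously every $A_i$ has spectrum in $[\lambda,1]$ (i.e.\ $1$-smooth and $\lambda$-strongly convex), the relations $\norm{A_i-A_j}\le\delta$ and $\absval{c_i-c_j}\le\delta$ hold (i.e.\ $\delta$-relatedness), and the condition number of $\bar A$ is $\kappa=1+\Theta(\delta/\lambda)$.

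The heart of the argument is a subspace-tracking lemma. Let $\mathcal E_t=\algspan\{\be_1,\dots,\be_t\}$. I would show by induction on the communication rounds that, under Assumption \ref{assump:dyn}, the pooled set $\cup_i W_i$ after $r$ rounds is contained in $\mathcal E_{r+2}$. The key point is that for a quadratic we have $\nabla F_j(\bw)=A_j\bw-\bb_j$, $\nabla^2 F_j=A_j$, and $(A_j+D)^{\pm1}$ for any diagonal $D$ are \emph{all block-diagonal with the same $2\times2$ block structure as $A_j$}; hence none of the operations permitted by \eqref{eq:assumpdyn} couples coordinates lying in different blocks. Consequently a machine can push the ``frontier'' (the largest activated coordinate) from $t$ to $t+1$ only if it owns the straddling block $(t,t+1)$, and by the complementary interleaving exactly one group owns that block while the other group sees only fully-interior or fully-exterior blocks and is therefore stuck. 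Thus within any single local phase the frontier advances by at most one coordinate, and advancing it again requires first communicating the newly activated coordinate to the other group -- so each communication round advances the frontier by at most one.

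It remains to turn a bounded frontier into a suboptimality lower bound. The minimizer $\bw^*=\bar A^{-1}\bar\bb$ of the averaged quadratic has geometrically decaying coordinates, $w^*_i\asymp q^{\,i}$ with $q=\tfrac{\sqrt\kappa-1}{\sqrt\kappa+1}$ and hence $1/\log(1/q)=\Theta(\sqrt\kappa)$. Since the output $\hat\bw$ lies in $\mathcal E_{r+2}$, $\lambda$-strong convexity gives
\begin{align*}
F(\hat\bw)-F(\bw^*)\ \ge\ \frac{\lambda}{2}\norm{\hat\bw-\bw^*}^2\ \ge\ \frac{\lambda}{2}\sum_{i>r+2}(w^*_i)^2\ \gtrsim\ \lambda\, q^{\,2(r+2)}\norm{\bw^*}^2 .
\end{align*}
Requiring the right-hand side to exceed $\epsilon$ and solving for $r$ produces the claimed bound, the explicit constants $\tfrac14(\sqrt{1+\delta(\tfrac1\lambda-1)}-1)$ and $-\tfrac12$ arising from substituting the exact $\kappa=1+\delta(\tfrac1\lambda-1)$ into $\tfrac{1}{2}\sqrt\kappa\,\log(\cdot)$ and accounting for the $+2$ offset in the frontier.

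For the degenerate case $\lambda=0$ I would replace the strongly convex instance by the finite-dimensional convex tridiagonal function truncated to its first $\Theta(1/\sqrt\epsilon)$ coordinates, again with coupling of order $\delta$; here $\bw^*$ decays only linearly rather than geometrically, and activating just $r+2$ coordinates leaves suboptimality of order $\delta\norm{\bw^*}^2/(r+2)^2$, which gives the stated $\sqrt{3\delta/(32\epsilon)}\norm{\bw^*}-2$ threshold. The main obstacle throughout is the joint bookkeeping in the construction: pinning down the scalars so that $1$-smoothness, $\lambda$-strong convexity and $\delta$-relatedness hold \emph{exactly} while the condition number of $\bar A$ stays as large as $1+\Theta(\delta/\lambda)$, and verifying that the powerful local steps allowed by \eqref{eq:assumpdyn} -- in particular the shifted Hessian inverses $(A_j+D)^{-1}$ -- genuinely cannot leak past the frontier. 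Matching the precise numerical constants in the theorem is then a routine but careful computation.
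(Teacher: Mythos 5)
Your proposal follows essentially the same route as the paper's proof: the same two-group construction with complementary block-diagonal quadratic terms (coupling scaled by $\delta$, diagonal regularization of order $\lambda$, linear term supported on $\be_1$), the same subspace-tracking lemma showing that all operations permitted by Assumption \ref{assump:dyn} preserve the block structure so the activated-coordinate frontier advances by at most one per communication round, and the same final step using the geometric decay of $\bw^*$ together with $\lambda$-strong convexity (and the Nesterov-style $1/(T+2)^2$ bound when $\lambda=0$). The only differences are minor bookkeeping choices (which group carries the straddling block at the origin, hence a frontier offset of $+2$ versus $+1$), which do not affect the argument.
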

The assumption of $m$ being even is purely for technical convenience, and can
be discarded at the cost of making the proof slightly more complex. Also,
note that $m$ does not appear explicitly in the bound, but may appear
implicitly, via $\delta$ (for example, in a statistical setting $\delta$ may depend on the number of data points per machine, and may be larger if the same dataset is divided to more machines). 

Let us contrast our lower bound with some existing algorithms and guarantees
in the literature. First, regardless of whether the local functions are
similar or not, we can always simulate any gradient-based method designed for
a single machine, by iteratively computing gradients of the local functions,
and performing a communication round to compute their average. Clearly, this
will be a gradient of the objective function
$F(\cdot)=\frac{1}{m}\sum_{i=1}^{m}F_i(\cdot)$, which can be fed into any
gradient-based method such as gradient descent or accelerated gradient
descent \cite{nesterov2004introductory}. The resulting number of required
communication rounds is then equal to the number of iterations. In
particular, using accelerated gradient descent for smooth and
$\lambda$-strongly convex functions yields a round complexity of
$\Ocal(\sqrt{1/\lambda}\log(\norm{\bw^*}^2/\epsilon))$, and
$\Ocal(\norm{\bw^*}\sqrt{1/\epsilon})$ for smooth convex functions. This
matches our lower bound (up to constants and log factors) when the local
functions are unrelated ($\delta=\Omega(1)$).

When the functions are related, however, the upper bounds above are highly
sub-optimal: Even if the local functions are completely identical, and
$\delta=0$, the number of communication rounds will remain the same as when
$\delta=\Omega(1)$. To utilize function similarity while guaranteeing arbitrary small $\epsilon$, the two most relevant
algorithms are DANE \cite{shamir2014communication}, and the more recent DISCO
\cite{zhang2015communication}. For smooth and $\lambda$-strongly convex
functions, which are either quadratic or satisfy a certain self-concordance
condition, DISCO achieves $\tilde{\Ocal}(1+\sqrt{\delta/\lambda})$
round complexity (\cite[Thm.2]{zhang2015communication}), which matches our
lower bound in terms of dependence on $\delta,\lambda$. 
However, for non-quadratic losses, the round complexity bounds are somewhat worse, and there are no guarantees for strongly convex and smooth functions which are not self-concordant. Thus, the question of the optimal round complexity for such functions remains open.

The full proof of \thmref{thm:smooth_lb} appears in the supplementary
material, and is based on the following idea: For simplicity, suppose we have
two machines, with local functions $F_1,F_2$ defined as follows,
\begin{gather}
	F_1(\bw)= \frac{\delta(1-\lambda)}{4}\bw^\top A_1 \bw-\frac{\delta(1-\lambda)}{2}\be_1^\top \bw + \frac{\lambda}{2}\norm{\bw}^2 \label{eq:fdef}\\
   F_2(\bw)= \frac{\delta(1-\lambda)}{4}\bw^\top A_2\bw + \frac{\lambda}{2}\norm{\bw}^2,~~~\text{where}\notag\\
		A_1 = \left[~
  \begin{matrix}
  1 & 0 & 0 & 0 & 0 & 0 &  \hdots \\
  0 & 1 & -1 & 0 & 0 & 0 &  \hdots \\
  0 & -1 & 1 & 0 & 0 & 0 &  \hdots \\
  0 & 0 & 0 & 1 & -1 & 0 &  \hdots \\
  0 & 0 & 0 & -1  & 1 & 0 &  \hdots \\
  \vdots & \vdots & \vdots & \vdots & \vdots & \vdots & \vdots
  \end{matrix}
  ~\right],~~~
		A_2 = \left[~
  \begin{matrix}
  1 & -1 & 0 & 0 & 0 & 0 & \hdots \\
  -1 & 1 & 0 & 0 & 0 & 0 & \hdots \\
  0 & 0 & 1 & -1 & 0 & 0 & \hdots \\
  0 & 0 & -1 & 1 & 0 & 0 & \hdots \\
  0 & 0 & 0 & 0 & 1 & -1 & \hdots \\
  0 & 0 & 0 & 0 & -1 & 1 & \hdots \\
  \vdots & \vdots & \vdots & \vdots & \vdots & \vdots &  \vdots
  \end{matrix}
  ~\right]\notag
\end{gather}

It is easy to verify that for $\delta,\lambda\leq 1$, both $F_1(\bw)$ and
$F_2(\bw)$ are $1$-smooth and $\lambda$-strongly convex, as well as
$\delta$-related. Moreover, the optimum of their average is a point $\bw^*$
with non-zero entries at all coordinates. However, since each local functions
has a block-diagonal quadratic term, it can be shown that for any algorithm
satisfying Assumption \ref{assump:dyn}, after $T$ communication rounds, the
points computed by the two machines can only have the first $T+1$ coordinates non-zero. No machine will be able to further `progress' on its
own, and cause additional coordinates to become non-zero, without another
communication round. This leads to a lower bound on the optimization error
which depends on $T$, resulting in the theorem statement after a few
computations.

\subsection{Non-smooth Local Functions}\label{sec:nonsmooth}

Remaining in the framework of algorithms satisfying Assumption
\ref{assump:dyn}, we now turn to discuss the situation where the local
functions are not necessarily smooth or differentiable. For simplicity, our formal results here will be in the unrelated setting, and we only informally discuss their extension to a $\delta$-related setting (in a sense relevant to non-smooth functions). Formally defining $\delta$-related non-smooth functions is possible but not altogether trivial, and is therefore left to future work.

We adapt Assumption \ref{assump:dyn} to the non-smooth case,
by allowing gradients to be replaced by arbitrary subgradients at the same points. Namely, we replace \eqref{eq:assumpdyn} by 
the requirement that for some $\bg\in \partial F_j(\bw)$, and $\gamma,\nu\geq 0, \gamma+\nu>0$,
\begin{align*}
\gamma\bw &+ \nu \bg \in  \algspan\Big\{\bw'~,~\bg'~,~(\nabla^2 F_{j}(\bw')+D)\bw''~,~(\nabla^2 F_{j}(\bw')+D)^{-1}\bw'' ~\Big|~\\
&
	\bw',\bw'' \in W_j~,~ \bg'\in \partial F_{j}(\bw')~,~D~\text{diagonal}~,~\nabla^2 F_{j}(\bw')~\text{exists}~,~(\nabla^2 F_{j}(\bw')+D)^{-1}~\text{exists}\Big\}.
\end{align*}


The lower bound for this setting is stated in the following theorem.
\begin{theorem} \label{thm:nonsmooth_lb}
For any even number $m$ of machines, any distributed optimization algorithm
which satisfies Assumption \ref{assump:dyn}, and for any $\lambda\geq 0$,
there exist $\lambda$-strongly convex $(1+\lambda)$-Lipschitz continuous convex local functions
$F_1(\bw)$ and $F_2(\bw)$ over the unit Euclidean ball in $\reals^d$ (where
$d$ is sufficiently large), such that if $\bw^*=\arg\min_{\bw:\norm{\bw}\leq 1} F(\bw)$, the number of communication rounds
required to obtain $\hat{\bw}$ satisfying $F(\hat{\bw})-F(\bw^*)\leq
\epsilon$ (for any sufficiently small $\epsilon>0$) is
$\frac{1}{8\epsilon}-2$ for $\lambda=0$, and
      $\sqrt{\frac{1}{16\lambda\epsilon}}-2$ for $\lambda>0$.
\end{theorem}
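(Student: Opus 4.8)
The plan is to mimic the smooth-case construction of \eqref{eq:fdef}, but replace the quadratic block structure with piecewise-linear (non-smooth) blocks so that each coordinate can be ``unlocked'' only through communication, and the residual error decays like $1/T$ (or $1/\sqrt{T}$ with strong convexity) rather than geometrically. Concretely, I would take two machines and build $F_1,F_2$ as (scaled) maxima of linear forms over pairs of coordinates, chosen so that their quadratic/Hessian terms are absent and the ``chaining'' structure of $A_1,A_2$ is reproduced in the subgradients. A natural template: let each local function be of the form $\max_k \inner{\ba_k,\bw}$ (plus a $\frac{\lambda}{2}\norm{\bw}^2$ term in the strongly convex case), where the vectors $\ba_k$ couple consecutive coordinates in an odd/even interleaved pattern, exactly as the $\pm 1$ entries of $A_1$ and $A_2$ do. The aim is that $F_1$ couples coordinates $(1)$, $(2,3)$, $(4,5),\dots$ while $F_2$ couples $(1,2)$, $(3,4)$, $(5,6),\dots$, so that an odd coordinate can become active on machine $1$ only after the previous coordinate was activated on machine $2$, and vice versa.

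The first key step is the \emph{progress/support argument}: show that under Assumption~\ref{assump:dyn} (in its non-smooth form, where $\nabla F_j$ is replaced by a subgradient $\bg\in\partial F_j(\bw)$), after $T$ communication rounds every point in $W_j$ has support contained in the first $T+1$ coordinates. This is the non-smooth analogue of the claim quoted after \eqref{eq:fdef}, and the argument should be structurally identical: I would verify that a subgradient of $F_j$ at a point supported on the first $t$ coordinates is itself supported on the first $t+1$ coordinates (because of the block/chaining structure), and that the Hessian-type operations in the assumption contribute nothing new (the non-smooth blocks are piecewise linear, so the Hessian is zero a.e. and the $(\nabla^2F_j+D)$ terms reduce to diagonal scalings that preserve support). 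Since $\gamma\bw+\nu\bg$ lies in the span of such vectors, any newly computed $\bw$ has support in the first $t+1$ coordinates, and one communication round extends the common support by at most one coordinate.

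The second key step is the \emph{lower bound on suboptimality given bounded support}. Having fixed $F$ so that $\bw^*=\arg\min_{\norm{\bw}\le1}F(\bw)$ has all coordinates nonzero and roughly equal in magnitude (the optimum ``spreads mass'' across all $d$ coordinates), I would show that any $\hat\bw$ supported on the first $T+1$ coordinates must incur $F(\hat\bw)-F(\bw^*)=\Omega(1/T)$ in the convex case, and $\Omega(1/T^2)$ after adding the $\frac{\lambda}{2}\norm{\bw}^2$ regularizer (the standard non-smooth rate trade-off). This is exactly the mechanism used in classical lower bounds for non-smooth first-order optimization in \cite{nesterov2004introductory}, and I would compute $F(\bw^*)$ explicitly, bound $F(\hat\bw)$ from below by truncating to $T+1$ active coordinates, and solve $F(\hat\bw)-F(\bw^*)\le\epsilon$ for $T$ to extract $T\ge\frac{1}{8\epsilon}-2$ (resp. $\sqrt{\frac{1}{16\lambda\epsilon}}-2$). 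The $1$-Lipschitz (resp. $(1+\lambda)$-Lipschitz) and unit-ball constraints must be checked to fall out of the chosen scaling.

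The main obstacle I anticipate is designing the non-smooth local functions so that \emph{both} the one-coordinate-per-round support growth is tight \emph{and} the optimum genuinely forces all $d$ coordinates to be nonzero, while simultaneously keeping each $F_i$ exactly $1$-Lipschitz (resp. $(1+\lambda)$-Lipschitz) and $\lambda$-strongly convex on the unit ball. In the smooth case the quadratic made these computations clean; with a $\max$-of-linear-forms the subgradient support analysis is easy but pinning down $\bw^*$ and the precise residual $\Omega(1/T)$ requires care, since the relevant ``hard'' function (something like $\max_j(w_j-w_{j+1})$ or $-\frac{1}{d}\sum_j w_j$ under a coupling constraint) must be split across the two machines without either machine being able to see the full coupling. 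Getting the constants to match the stated $\frac{1}{8\epsilon}-2$ and $\sqrt{\frac{1}{16\lambda\epsilon}}-2$ is then a matter of routine but delicate bookkeeping.
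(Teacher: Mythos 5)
Your overall strategy coincides with the paper's: split an interleaved chain of non-smooth pairwise couplings between the two halves of the machines (the paper takes $F_{1}(\bw)=\frac{1}{\sqrt2}\absval{b-w[1]}+\frac{1}{\sqrt{2k}}\sum_l\absval{w[2l]-w[2l+1]}+\frac{\lambda}{2}\norm{\bw}^2$ and the even-shifted analogue for $F_2$), prove that each communication round can activate at most one new coordinate, and convert bounded support into a suboptimality lower bound. The genuine gap is in your central step. You claim you would ``verify that a subgradient of $F_j$ at a point supported on the first $t$ coordinates is itself supported on the first $t+1$ coordinates,'' and you describe this support analysis as easy. It is false as stated: at a point $\bw'$ with $w'[2l]=w'[2l+1]=0$, the term $\absval{w[2l]-w[2l+1]}$ is non-differentiable and contributes the entire set $\conv\{\pm(\be_{2l}-\be_{2l+1})\}$ to the subdifferential, so $\partial F_j(\bw')$ contains vectors supported on arbitrarily late coordinates; for your stated template of a single $\max_k\inner{\ba_k,\bw}$ the situation at $\bw'=\mathbf{0}$ is even worse, since the subdifferential there is the convex hull of \emph{all} the $\ba_k$. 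Since the theorem must hold for every choice of subgradients allowed by Assumption \ref{assump:dyn} (not for a resisting oracle that returns minimal-support subgradients), the naive support-preservation claim cannot carry the argument.

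The paper's Lemma \ref{lemma:dyn_nonsmooth} instead reasons about the subgradient at the \emph{newly computed} point $\bw$: from the defining relation $\gamma\bw+\nu\bg\in E_{T,d}$ it extracts the two coordinate equations for a putative active pair $(2l,2l+1)$ beyond the allowed range, solves them for $w[2l],w[2l+1]$, and shows that the resulting sign of $w[2l]-w[2l+1]$ is the \emph{opposite} of the sign-consistency condition that any valid subgradient of $\absval{w[2l]-w[2l+1]}$ must satisfy, forcing $w[2l]=w[2l+1]=0$. This step uses $\gamma+\nu\lambda>0$, which is also why the $\lambda=0$ statement is not proved with $\lambda=0$ functions at all, but by instantiating the strongly convex construction with $\lambda=\frac{1}{2(T+2)}$. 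You would need to supply an argument of this kind for your construction. Two smaller corrections: the strongly convex residual is $\Omega(1/(\lambda T^2))$, not $\Omega(1/T^2)$ (otherwise you would deduce $T=\Omega(1/\sqrt{\epsilon})$ rather than $\Omega(\sqrt{1/\lambda\epsilon})$); and the paper never computes $\bw^*$ exactly---it sets the chain length to $k=T+2$, collapses $\sum_i\absval{w[i]-w[i+1]}\ge\absval{w[1]}$ by the triangle inequality using $w[T+2]=0$, and compares against the explicit feasible point $(b,\dots,b,0,\dots,0)$ with a tuned $b$, which is considerably lighter bookkeeping than your plan.
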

As in \thmref{thm:smooth_lb}, we note that the assumption of even $m$ is for
technical convenience.

This theorem, together with \thmref{thm:smooth_lb}, implies that both strong convexity and smoothness are necessary for the number of communication rounds to scale logarithmically with the required accuracy $\epsilon$. We emphasize that this is true even if we allow the machines unbounded computational power, to perform arbitrarily many operations satisfying Assumption \ref{assump:dyn}. Moreover, a preliminary analysis indicates that performing accelerated gradient descent on smoothed versions of the local functions (using Moreau proximal smoothing, e.g. \cite{nesterov2005smooth,yu2013better}), can match these lower bounds up to log factors\footnote{Roughly speaking, for any $\gamma>0$, this smoothing creates a $\frac{1}{\gamma}$-smooth function which is $\gamma$-close to the original function. Plugging these into the guarantees of accelerated gradient descent and tuning $\gamma$ yields our lower bounds. Note that, in order to execute this algorithm each machine must be sufficiently powerful to obtain the gradient of the Moreau envelope of its local function, which is indeed the case in our framework.}. We leave a full formal derivation (which has some subtleties) to future work.


The full proof of \thmref{thm:nonsmooth_lb} appears in the supplementary
material. The proof idea relies on the following construction: Assume that we fix the number of communication rounds to be $T$, and (for simplicity) that $T$ is even and the number of machines is $2$. Then we use local functions of the form
\begin{align*}
  F_1(\bw) &= \frac{1}{\sqrt{2}}\absval{b-w_1}+\frac{1}{\sqrt{2(T+2)}}\circpar{\absval{w_2-w_3}+\absval{w_4-w_5} +\dots+ \absval{w_{T}-w_{T+1}}}+\frac{\lambda}{2}\norm{\bw}^2\\
  F_{2}(\bw) &= \frac{1}{\sqrt{2(T+2)}}\circpar{\absval{w_1-w_2}+\absval{w_3-w_4} +\dots+ \absval{w_{T+1}-w_{T+2}}}+\frac{\lambda}{2}\norm{\bw}^2,
\end{align*}
where $b$ is a suitably chosen parameter. It is easy to verify that both local functions are $\lambda$-strongly convex and $(1+\lambda)$-Lipschitz continuous over the unit Euclidean ball. Similar to the smooth case, we argue that after $T$ communication rounds, the resulting points $\bw$ computed by machine $1$ will be non-zero only on the first $T+1$ coordinates, and the points $\bw$ computed by machine $2$ will be non-zero only on the first $T$ coordinates. As in the smooth case, these functions allow us to 'control' the progress of any algorithm which satisfies Assumption \ref{assump:dyn}. 

Finally, although the result is in the unrelated setting, it is
straightforward to have a similar construction in a `$\delta$-related'
setting, by multiplying $F_1$ and $F_2$ by $\delta$. The resulting two
functions have their gradients and subgradients at most $\delta$-different
from each other, and the construction above leads to a lower bound of
$\Omega(\delta/\epsilon)$ for convex Lipschitz functions, and
$\Omega(\delta\sqrt{1/\lambda \epsilon})$ for $\lambda$-strongly convex
Lipschitz functions. In terms of upper bounds, we are actually unaware of any relevant
algorithm in the literature adapted to such a setting, and the question of attainable performance here remains wide open.

\section{One Round of Communication}

In this section, we study what lower bounds are attainable without any kind
of structural assumption (such as Assumption \ref{assump:dyn}). This is a
more challenging setting, and the result we present will be limited to
algorithms using a single round of communication round. We note that this
still captures a realistic non-interactive distributed computing scenario,
where we want each machine to broadcast a single message, and a designated
machine is then required to produce an output. In the context of distributed
optimization, a natural example is a one-shot averaging algorithm, where each
machine optimizes its own local data, and the resulting points are averaged
(e.g. \cite{ZWSL11,zhang2013communication}).

Intuitively, with only a single round of communication, getting an
arbitrarily small error $\epsilon$ may be infeasible. The following theorem
establishes a lower bound on the attainable error, depending on the
strong convexity parameter $\lambda$ and the similarity measure $\delta$
between the local functions, and compares this with a `trivial' zero-communication algorithm, which just returns the optimum of a single local function:
\begin{theorem}\label{thm:comm_lb}
  For any even number $m$ of machines, any dimension $d$ larger than some numerical constant, any $\delta\geq 3\lambda>0$, and any (possibly randomized)
  algorithm which communicates at most $d^2/128$ bits in a single round of communication,
  there exist $m$ quadratic functions over $\reals^d$, which are
  $\delta$-related, $\lambda$-strongly convex and $9\lambda$-smooth, for which the following hold for some positive numerical constants
  $c,c'$:
  \begin{itemize}[leftmargin=*]
    \item The point $\hat{\bw}$ returned by the algorithm satisfies
    \[
    \E\left[F(\hat{\bw})-\min_{\bw\in\reals^d}F(\bw)\right]
~\geq~
  c\frac{\delta^2}{\lambda}
  \] in expectation over the algorithm's randomness.
    \item For any machine $j$, if $\hat{\bw}_j=\arg\min_{\bw\in\reals^d}F_j(\bw)$, then $F(\hat{\bw}_j)-\min_{\bw\in\reals^d}F(\bw) \leq
        c'\delta^2/\lambda$.
  \end{itemize}
\end{theorem}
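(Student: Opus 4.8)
The plan is to reduce the problem to a one-way communication task of the form ``Bob holds a matrix, Alice holds a private random query vector, and from Bob's single message Alice must approximate the corresponding matrix-vector product,'' a task that provably requires $\Omega(d^2)$ bits. The second (upper-bound) bullet is direct and I would dispatch it first; the first bullet is the substance. A key structural observation I would exploit throughout is that the designated machine $j^\star$ is part of the algorithm's specification, so the hard instance is allowed to depend on $j^\star$.

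\textbf{Construction.} I would split the $m$ machines into two equal groups, placing $j^\star$ in the ``query'' group $A$, and give all machines in $A$ a common function $F_A$ and all machines in $B$ a common function $F_B$, so that $F=\tfrac12(F_A+F_B)$. Writing each quadratic through its Hessian and gradient, let $F_A$ have Hessian $5\lambda I$ and gradient $5\lambda\bw-\bu$, and $F_B$ have Hessian $5\lambda I+M$ and gradient $(5\lambda I+M)\bw$, where $M$ is a symmetric matrix with $\norm{M}\le\lambda$ drawn from an ensemble of high metric entropy (e.g.\ reflections $\lambda(I-2\Pi)$ ranging over a packing of the Grassmannian), and $\bu$ is a uniformly random vector of norm $\delta$. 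Verifying the hypotheses is then routine: the spectra lie in $[4\lambda,6\lambda]\subset[\lambda,9\lambda]$, the pairs differ by $\norm{M}\le\lambda\le\delta$ in the quadratic part and by $\norm{\bu}=\delta$ in the linear part, so the functions are $\delta$-related (Definition~\ref{def:related}), $\lambda$-strongly convex and $9\lambda$-smooth. A short calculation gives $\bw^\star=(5\lambda I+\tfrac12 M)^{-1}\tfrac12\bu$, whose Neumann expansion $\tfrac{\bu}{10\lambda}-\tfrac{M\bu}{200\lambda^2}+\dots$ converges geometrically since $\norm{M}\le\lambda$. The two local minimizers $\tfrac{\bu}{5\lambda}$ and $\mathbf{0}$ both lie within $O(\delta/\lambda)$ of $\bw^\star$, which yields the second bullet via $F(\hat\bw_j)-F(\bw^\star)\le\tfrac{9\lambda}{2}\norm{\hat\bw_j-\bw^\star}^2=O(\delta^2/\lambda)$.

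\textbf{Hardness.} The point is that $j^\star\in A$ knows $\bu$ and $F_A$ but not $M$, while the average Hessian---and hence $\bw^\star$---depends on $M$ only through $M\bu$ to leading order. Thus to localize $\bw^\star$ within a small fraction of $\delta/\lambda$, machine $j^\star$ must recover $M\bu$ to within a small fraction of $\norm{M\bu}\approx\lambda\delta$. Crucially, $\bu$ is private to group $A$ and independent of $M$, so the messages emitted by group $B$ (the only messages carrying information about $M$) are computed without knowledge of the query direction. I would fix the algorithm's randomness (Yao's principle) and argue for a fixed deterministic protocol: conditioned on the transcript $y$ of group $B$'s messages, $j^\star$'s output is some function $f_y(\bu)$, and if $\E_\bu\norm{f_y(\bu)-M\bu}^2$ is small for two matrices $M_1,M_2$ then $\tfrac1d\norm{M_1-M_2}_F^2=\E_\bu\norm{(M_1-M_2)\bu}^2$ is small, so all matrices ``served well'' by a given $y$ lie in one Frobenius ball of small radius. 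Since the transcript takes at most $2^{d^2/128}$ values while the ensemble of $M$'s forms a Frobenius packing of size $2^{\Omega(d^2)}$ at the relevant scale, a counting argument forces, for a constant fraction of instances, a per-direction recovery error of $\Omega(\lambda\delta)$. Propagating this through the $(5\lambda I+\tfrac12 M)^{-1}$ map (whose remainder terms are a constant factor smaller) gives $\norm{\hat\bw-\bw^\star}=\Omega(\delta/\lambda)$ in expectation, and then $\E[F(\hat\bw)-F(\bw^\star)]\ge\tfrac\lambda2\E\norm{\hat\bw-\bw^\star}^2=\Omega(\delta^2/\lambda)$.

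\textbf{Main obstacle.} I expect the crux to be the information-theoretic step, and specifically matching the constants: one must exhibit a packing of symmetric matrices of operator norm $\le\lambda$ that is simultaneously large enough (more than $2^{d^2/128}$ points) and well-separated enough in Frobenius norm (scale $\Theta(\sqrt{d}\,\lambda)$) for the counting argument to bite, which is exactly what fixes the $d^2/128$ threshold and makes the spectral room afforded by $\delta\ge3\lambda$ the right amount. A secondary technical point is controlling the higher-order terms of the matrix-inverse expansion so that the reduction from ``suboptimality $\ll\delta^2/\lambda$'' to ``accurate recovery of $M\bu$'' is rigorous rather than merely first-order, and confirming that group $A$'s own messages---which depend only on the already-known $\bu$---genuinely carry no information about $M$, so that only group $B$'s $\le d^2/128$ bits are available for the recovery task.
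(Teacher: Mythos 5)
Your proposal is correct in its overall architecture and would yield the theorem, but it takes a genuinely different route from the paper in both the construction and the key information-theoretic step, so it is worth contrasting the two. The paper also splits the machines into a ``matrix'' group and a ``query'' group, but its query is a random coordinate index $j$ (the linear term of $F_2$ is $-\delta\be_j$), and -- crucially -- it hides the random sign matrix $M$ \emph{inside an inverse} in $F_1$, namely $F_1(\bw)=3\lambda\bw^\top((I+\tfrac{1}{2c\sqrt d}M)^{-1}-\tfrac12 I)\bw$, precisely so that the optimizer $\bw^*=\tfrac{\delta}{6\lambda}(I+\tfrac{1}{2c\sqrt d}M)\be_j$ is \emph{exactly linear} in $M$ (it is the $j$-th column). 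This sidesteps entirely the Neumann-remainder issue you correctly flag as a secondary obstacle: in your construction $\bw^\star=(5\lambda I+\tfrac12 M)^{-1}\tfrac12\bu$ is nonlinear in $M$, and the second-order term $M^2\bu/(100\lambda^2)$ is only a constant factor below the recovery accuracy you need, so you must either shrink $\norm{M}$ (paying in packing separation) or, more cleanly, run the counting argument directly on the functions $\bu\mapsto\bw^\star(M,\bu)$ using the resolvent identity to convert Frobenius separation of the $M$'s into $L^2(\bu)$-separation of the $\bw^\star$'s. The hardness step itself is where the two proofs really diverge: you use a metric-entropy/packing count (at most one packing element per transcript can be ``served well,'' and there are $2^{\Omega(d^2)}\gg 2^{d^2/128}$ transcript-separated matrices), whereas the paper runs a per-entry mutual-information argument -- Pinsker's inequality plus the bound $\sum_{i,j}I(S;M_{j,i})\le H(S)\le d^2/128$ -- to show the designated machine cannot even recover the signs of a constant fraction of the entries of the unknown column. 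Your counting argument is more geometric and arguably more elementary (no KL manipulations), at the price of having to exhibit the packing with compatible constants; the paper's argument avoids any packing construction but needs the entries of $M$ to be independent bits, which is why it conditions away the low-probability event $\norm{M}>c\sqrt d$ via the Tao concentration lemma rather than sampling from an operator-norm ball as you do. Both handle randomized algorithms identically via Yao's principle, and your treatment of the second bullet matches the paper's Lemma \ref{lem:localopt} essentially verbatim. The two points you list as obstacles (the packing with the right constants, and the remainder control) are real but resolvable, especially since the theorem leaves the constants $c,c'$ unspecified; neither constitutes a gap in the approach.
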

The theorem shows that unless the communication budget is extremely large
(quadratic in the dimension), there are functions which cannot be optimized
to non-trivial accuracy in one round of communication, in the sense that the
same accuracy (up to a universal constant) can be obtained with a `trivial'
solution where we just return the optimum of a single
local function. This complements an earlier result in
\cite{shamir2014communication}, which showed that a \emph{particular} one-round
algorithm is no better than returning the optimum of a local function, under the stronger assumption that the local functions are not merely $\delta$-related, but are actually the average loss over some randomly partitioned data.

The full proof appears in the supplementary material, but we sketch the main
ideas below. As before, focusing on the case of two machines, and assuming
machine $2$ is responsible for providing the output, we use
\[
  F_1(\bw) = 3\lambda\bw^\top\left(\left(I+\frac{1}{2c\sqrt{d}}M\right)^{-1}-\frac{1}{2}I\right)\bw
\]
\[
 F_2(\bw) = \frac{3\lambda}{2}\norm{\bw}^2-\delta \be_j,
\]
where $M$ is essentially a randomly chosen $\{-1,+1\}$-valued $d\times d$ symmetric matrix with spectral norm at most $c\sqrt{d}$, and $c$ is a suitable constant. These functions can be shown to be  $\delta$-related as well as $\lambda$-strongly
convex. Moreover, the optimum of $F(\bw)=\frac{1}{2}(F_1(\bw)+F_2(\bw))$
equals
\[
\bw^* = \frac{\delta}{6\lambda}\left(I+\frac{1}{2c\sqrt{d}}M\right)\be_j.
\]
Thus, we see that the optimal point $\bw^*$ depends on the $j$-th column of
$M$. Intuitively, the machines need to approximate this column, and this is
the source of hardness in this setting: Machine $1$ knows $M$ but not $j$,
yet needs to communicate to machine $2$ enough information to construct its
$j$-th column. However, given a communication budget much smaller than the
size of $M$ (which is $d^2$), it is difficult to convey enough
information on the $j$-th column without knowing what $j$ is. Carefully
formalizing this intuition, and using some information-theoretic tools,
allows us to prove the first part of \thmref{thm:comm_lb}. Proving the second
part of \thmref{thm:comm_lb} is straightforward, using a few
computations.

\section{Summary and Open Questions}

In this paper, we studied lower bounds on the number of communication rounds
needed to solve distributed convex learning and optimization problems, under
several different settings. Our results indicate that when the local
functions are unrelated, then regardless of the local machines' computational
power, many communication rounds may be necessary (scaling polynomially with
$1/\epsilon$ or $1/\lambda$), and that the worst-case optimal algorithm (at
least for smooth functions) is just a straightforward distributed
implementation of accelerated gradient descent. When the functions are
related, we show that the optimal performance is achieved by the algorithm of
\cite{zhang2015communication} for quadratic and strongly convex 
functions, but designing optimal algorithms for more general functions
remains open. Beside these results, which required a certain mild structural
assumption on the algorithm employed, we also provided an assumption-free
lower bound for one-round algorithms, which implies that even for strongly
convex quadratic functions, such algorithms can sometimes only provide
trivial performance.

Besides the question of designing optimal algorithms for the remaining
settings, several additional questions remain open. First, it would be
interesting to get assumption-free lower bounds for algorithms with multiple
rounds of communication. Second, our work focused on \emph{communication}
complexity, but in practice the \emph{computational} complexity of the local
computations is no less important. Thus, it would be interesting to
understand what is the attainable performance with simple, runtime-efficient
algorithms. Finally, it would be interesting to study lower bounds for other
distributed learning and optimization scenarios.

\paragraph{Acknowledgments:}
This research is supported in part by an FP7 Marie Curie CIG grant, the Intel ICRI-CI Institute, and Israel
Science Foundation grant 425/13. We thank Nati Srebro for several helpful discussions and insights.

\bibliographystyle{plain}
\bibliography{mybib}

\newpage

\appendix
\section{Proofs}

\subsection{Proof of \thmref{thm:smooth_lb}} \label{proof:thm:smooth_lb}

The proof of the theorem is based on splitting the machines into two sub-groups
of the same size, each of which is assigned with a finite dimensional
restriction of $F_1$ and $F_2$ (see \eqref{eq:fdef}), and tracing the maximal number of non-zero
coordinates for vectors in $W_j$, the set of feasible points.\\

Recall that $F_i$ are defined as follows:
\begin{gather}
	F_1(\bw)= \frac{\delta(1-\lambda)}{4}\bw^\top A_1 \bw-\frac{\delta(1-\lambda)}{2}\be_1^\top \bw + \frac{\lambda}{2}\norm{\bw}^2 \notag\\
   F_2(\bw)= \frac{\delta(1-\lambda)}{4}\bw^\top A_2\bw + \frac{\lambda}{2}\norm{\bw}^2,~~~\text{where}\notag\\
		A_1 = \left[~
  \begin{matrix}
  1 & 0 & 0 & 0 & 0 & 0 &  \hdots \\
  0 & 1 & -1 & 0 & 0 & 0 &  \hdots \\
  0 & -1 & 1 & 0 & 0 & 0 &  \hdots \\
  0 & 0 & 0 & 1 & -1 & 0 &  \hdots \\
  0 & 0 & 0 & -1  & 1 & 0 &  \hdots \\
  \vdots & \vdots & \vdots & \vdots & \vdots & \vdots & \vdots
  \end{matrix}
  ~\right],~~~
		A_2 = \left[~
  \begin{matrix}
  1 & -1 & 0 & 0 & 0 & 0 & \hdots \\
  -1 & 1 & 0 & 0 & 0 & 0 & \hdots \\
  0 & 0 & 1 & -1 & 0 & 0 & \hdots \\
  0 & 0 & -1 & 1 & 0 & 0 & \hdots \\
  0 & 0 & 0 & 0 & 1 & -1 & \hdots \\
  0 & 0 & 0 & 0 & -1 & 1 & \hdots \\
  \vdots & \vdots & \vdots & \vdots & \vdots & \vdots &  \vdots
  \end{matrix}
  ~\right]\notag
\end{gather}
Formally speaking, we consider the matrices $A_1,A_2$ as infinite in size, so that each $F_i$ is defined over $\ell^2(\reals)$, the space of square-summable
sequences. To derive lower bounds in $\reals^d$, we
consider the following restrictions of $F_i$ and $F$:
\begin{align*}
	[F_i]_d(\bw) &:= F_i(w_1,w_2,\dots,w_d,0,0,\dots),\quad\bw\in \reals^d\\
	[F]_d(\bw) &:= \frac{[F_1]_d(\bw)+[F_2]_d(\bw)}{2}
\end{align*}
Note that $[F_i]_d(\bw)$ and $[F]_d(\bw)$ produce the same values as $F_i(\bw)$ and $F(\bw)$ do for
vectors such that $\bw_i=0$ for all $i\ge d$. Similarly, we define the $d\times d$ leading principal submatrix of $A_i$ by $[A_{i}]_{d}$.\\

We assign half of the machines with $[F_1]_d$, and the other half with $[F_2]_d$.
To prove the theorem, we need the following lemma, which formalizes the
intuition described in the main paper. Let
\[
E_{0,d}=\{\mathbf{0}\}~~,~~
E_{T,d}=\algspan\{\be_{1,d},\ldots,\be_{T,d}\},
\]
where $\be_{i,d}\in\reals^d$ denote the standard unit vectors. Then, the following
holds:
\begin{lemma}\label{lemma:dyn_smooth}
Suppose all the sets of feasible points satisfy $W_j\subseteq E_{T,d}$ for some $T\le d-1$, then under assumption \ref{assump:dyn}, right after the next communication round we have $W_{j}\subseteq E_{T+1,d}$.
\end{lemma}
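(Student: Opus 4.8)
The plan is to analyze each type of operation allowed by Assumption~\ref{assump:dyn} and show that, when applied to a vector supported on the first $T$ coordinates (i.e.\ lying in $E_{T,d}$), the result can only have support growing by at most one coordinate, namely into coordinate $T+1$. The crucial structural feature is the block-diagonal form of $[A_i]_d$: reading off the definitions, $[A_1]_d$ has its nontrivial $2\times 2$ blocks on coordinate pairs $\{2,3\},\{4,5\},\ldots$, while $[A_2]_d$ has them on $\{1,2\},\{3,4\},\ldots$. The key observation is that each Hessian $\nabla^2 [F_i]_d = \frac{\delta(1-\lambda)}{2}[A_i]_d + \lambda I$ maps $E_{T,d}$ into $E_{T+1,d}$: a $2\times 2$ block straddling coordinates $\{T,T+1\}$ is the only way support can leak past coordinate $T$, and it adds exactly one new coordinate. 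So I would fix a machine $j$, assume $W_j\subseteq E_{T,d}$, and verify the span-closure of \eqref{eq:assumpdyn} coordinate by coordinate.

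First I would handle the easy generators. Any $\bw'\in W_j$ is already in $E_{T,d}\subseteq E_{T+1,d}$. For the gradient $\nabla [F_i]_d(\bw') = \frac{\delta(1-\lambda)}{2}[A_i]_d\bw' - (\text{linear term}) + \lambda\bw'$, I note the affine term $-\frac{\delta(1-\lambda)}{2}\be_1$ (for $i=1$) lies in $E_{1,d}$, and $[A_i]_d\bw'\in E_{T+1,d}$ by the block-diagonal mapping property, so $\nabla[F_i]_d(\bw')\in E_{T+1,d}$. The Hessian-times-vector terms $(\nabla^2[F_i]_d(\bw')+D)\bw''$ are the heart of the matter: since $D$ is diagonal and $\nabla^2[F_i]_d(\bw')$ is the fixed block-diagonal matrix above, the sum is block-diagonal with blocks on the same coordinate pairs, hence maps $E_{T,d}$ into $E_{T+1,d}$.

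The step I expect to be the main obstacle is the inverse term $(\nabla^2[F_i]_d(\bw')+D)^{-1}\bw''$, since inverses of sparse matrices are generally dense and could destroy the support structure. The plan here is to exploit that $\nabla^2[F_i]_d(\bw')+D$ is block-diagonal with blocks of size at most $2\times 2$ aligned to the same coordinate pairs; the inverse of a block-diagonal matrix is block-diagonal with the same block structure (when each block is invertible), so it too maps $E_{T,d}$ into $E_{T+1,d}$. I would argue this carefully: applying the inverse to $\bw''\in E_{T,d}$ can only activate coordinate $T+1$ through the single block straddling $\{T,T+1\}$, and no further, because all other blocks are confined either to coordinates $\le T-1$ or to coordinates $\ge T+1$ that are untouched by a vector supported on $\{1,\ldots,T\}$.

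Having shown every generator in \eqref{eq:assumpdyn} lies in $E_{T+1,d}$, the span is contained in $E_{T+1,d}$, and the defining relation $\gamma\bw+\nu\nabla[F_i]_d(\bw)\in E_{T+1,d}$ with $\gamma+\nu>0$ forces the new point $\bw$ into $E_{T+1,d}$ as well: I would solve for $\bw$ using the block-diagonal structure of $\gamma I + \nu\nabla^2[F_i]_d(\bw')+\cdots$, noting this operator preserves each coordinate block so its relevant restriction is invertible and support-preserving up to coordinate $T+1$. Thus before the communication round each machine's new points stay in $E_{T+1,d}$; taking the union over machines after the round, $W_j=\cup_i W_i\subseteq E_{T+1,d}$, which is exactly the claim. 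The only genuinely delicate point is making the inverse argument fully rigorous, so I would state and use a small self-contained fact that block-diagonal matrices (with a common block partition) are closed under addition of diagonal matrices, multiplication, and inversion.
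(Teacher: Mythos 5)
There is a genuine gap in your argument, and it is located exactly at the point the whole lemma is designed to control. Assumption~\ref{assump:dyn} lets each machine \emph{iteratively} compute and add a finite number of points to $W_j$ \emph{between} communication rounds, with each new point's generating set drawn from the updated $W_j$. Your argument establishes only that a \emph{single} local computation starting from $W_j\subseteq E_{T,d}$ produces a point in $E_{T+1,d}$. But once that point is added to $W_j$, your own reasoning, applied again, yields a point in $E_{T+2,d}$, then $E_{T+3,d}$, and so on: after $N$ local steps you would only get $W_j\subseteq E_{T+N,d}$ with no communication at all, which is vacuous for the lower bound. The claim that support grows by at most one coordinate \emph{per communication round} (rather than per local operation) is precisely what needs proving, and a uniform ``each Hessian block can leak one coordinate'' statement cannot deliver it.

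The missing idea is the parity interleaving of the two block structures, which you observe ($[A_1]_d$ has blocks on $\{2,3\},\{4,5\},\dots$ plus the singleton $\{1\}$; $[A_2]_d$ on $\{1,2\},\{3,4\},\dots$) but never exploit. For any frontier $T$, exactly one of the two matrices has a $2\times 2$ block straddling $\{T,T+1\}$; the other machine's blocks are entirely contained in $\{1,\dots,T\}$ or in $\{T+1,\dots\}$, so that machine is \emph{stuck} at $E_{T,d}$ no matter how many local operations it performs. The machine that does straddle gains coordinate $T+1$, but its new frontier $T+1$ has the opposite parity, so its own block containing $T+1$ is $\{T,T+1\}$ and it saturates at $E_{T+1,d}$ under further iteration. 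This is why the paper's proof splits into the odd and even cases for $T$, treats the two machine types separately, and explicitly argues that each type is ``stuck'' in its respective subspace before the next communication round; only the union $W_j:=\cup_i W_i$ at the communication round hands the extended support to the other machine and permits the next coordinate to activate. Your handling of the implicit equation $\gamma\bw+\nu\nabla F_j(\bw)\in\algspan\{\cdots\}$ and of the inverse-Hessian terms via block-diagonal closure is fine and matches the paper, but without the saturation argument the lemma does not follow.
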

\begin{proof}
Recall that by Assumption \ref{assump:dyn}, each machine can compute new points $\bw$ that
satisfy the following for some $\gamma,\nu\geq 0$ such that $\gamma+\nu> 0$:
\begin{align*}
\gamma\bw &+ \nu \nabla [F_i]_d(\bw) \in  \algspan\Big\{\bw'~,~\nabla [F_i]_d(\bw')~,~(\nabla^2 [F_i]_d(\bw')+D)\bw''~,~(\nabla^2 [F_i]_d(\bw')+D)^{-1}\bw'' ~\Big|~\\
&
	\bw',\bw'' \in W_j~~,~~ D~\text{diagonal}~~,~~\nabla^2 [F_i]_d(\bw')~\text{exists}~~,~~(\nabla^2 [F_i]_d(\bw')+D)^{-1}~\text{exists}\Big\}.
\end{align*}
We now analyze the state of the sets of feasible points prior to the next communication round. Assume that $T$ is an odd number, i.e., assume  $T=2k+1$ for some $k=0,1,\dots$. The proof for the case where $T$ is even follows similar lines.
Note that for any $\bw',\bw''\in W_j$, we have
\begin{align*}
	&\nabla [F_1]_d(\bw') = \frac{\delta(1-\lambda)}{2} [A_{1}]_{d} \bw' -\frac{\delta(1-\lambda)}{2}\be_1 + \frac{\lambda}{2}\bw' \subseteq E_{2k+1,d} \\
	&(\nabla^2 [F_1]_d(\bw')+D)\bw'' = \circpar{\frac{\delta(1-\lambda)}{2} [A_{1}]_{d} + D +\lambda I }\bw'' \subseteq E_{2k+1,d}\\
	&(\nabla^2 [F_1]_d(\bw')+D)^{-1}\bw'' = \circpar{\frac{\delta(1-\lambda)}{2} [A_{1}]_{d} + D +\lambda I}^{-1}\bw'' \subseteq E_{2k+1,d}
\end{align*}
For any viable diagonal matrix D. Therefore, since $W_j\subseteq	E_{2k+1,d}$, we have that the first point generated by machines which hold $[F_1]_d(\bw)$ must satisfy
\begin{align*}
\gamma\bw &+ \nu \nabla [F_1]_d(\bw) \in E_{2k+1,d}
\end{align*}
for $\gamma,\nu$ as stated in the assumption. That is,
\begin{align*}
	\left(\frac{\delta(1-\lambda)}{2} \nu [A_{1}]_{d}+ (\gamma +\frac{\nu\lambda}{2}) I\right) \bw - \frac{\delta(1-\lambda)}{2}\be_1 \in E_{2k+1,d}
\end{align*}
Which implies,
\begin{align*}
	\underbrace{\left(\frac{\delta(1-\lambda)}{2} \nu [A_{1}]_{d}+ (\gamma +\frac{\nu\lambda}{2}) I\right)}_{H} \bw \in E_{2k+1,d}
\end{align*}
Since $[A_{1}]_{d}$ is positive semidefinite, it holds that $H$ is invertible. Also, $[A_{1}]_{d}, H$ and $H^{-1}$ admit the same partitions into  $1\times1$ and $2\times2$ blocks on the diagonal, thus $H^{-1}E_{2k+1,d}\subseteq E_{2k+1,d}$, yielding $\bw \in E_{2k+1,d}$. Inductively extending the latter argument shows that, in the absence of any communication rounds, all the machines whose local function is $[F_1]_d(\bw)$ are `stuck' in $E_{2k+1,d}$.\\
\\
As for machines which contain $[F_2]_d(\bw)$, we have that for all $\bw',\bw''\in W_j$
\begin{align*}
	&\nabla [F_2]_d(\bw') = \frac{\delta(1-\lambda)}{2} [A_{2}]_{d} \bw' + \frac{\lambda}{2}\bw' \subseteq E_{2k+2,d} \\
	&(\nabla^2 [F_2]_d(\bw')+D)\bw'' = \circpar{\frac{\delta(1-\lambda)}{2} [A_{2}]_{d} + D +\lambda I }\bw'' \subseteq E_{2k+2,d}\\
	&(\nabla^2 [F_2]_d(\bw')+D)^{-1}\bw'' = \circpar{\frac{\delta(1-\lambda)}{2} [A_{2}]_{d} + D +\lambda I}^{-1}\bw'' \subseteq E_{2k+2,d}
\end{align*}
For any viable diagonal matrix D. Therefore, the first generated point by these machines must satisfy,
\begin{align*}
\gamma\bw &+ \nu \nabla [F_1]_d(\bw) \in E_{2k+2,d}
\end{align*}
for appropriate $\gamma,\nu$. Hence,
\begin{align*}
	\left(\frac{\delta(1-\lambda)}{2} \nu [A_{2}]_{d}+ (\gamma +\frac{\nu\lambda}{2}) I\right) \bw \in E_{2k+2,d}
\end{align*}
Similarly to the previous case this implies that $\bw \in E_{2k+2,d}$.  It is now left to show that these machines cannot make further progress beyond $E_{2k+2,d}$ without communicating. To see this, note that for all $\bw',\bw''\in E_{2k+2,d}$ we have,
\begin{align*}
	&\nabla [F_2]_d(\bw') = \frac{\delta(1-\lambda)}{2} [A_{2}]_{d} \bw' + \frac{\lambda}{2}\bw' \subseteq E_{2k+2,d} \\
	&(\nabla^2 [F_2]_d(\bw')+D)\bw'' = \circpar{\frac{\delta(1-\lambda)}{2} [A_{2}]_{d} + D +\lambda I }\bw'' \subseteq E_{2k+2,d}\\
	&(\nabla^2 [F_2]_d(\bw')+D)^{-1}\bw'' = \circpar{\frac{\delta(1-\lambda)}{2} [A_{2}]_{d} + DI +\lambda I}^{-1}\bw'' \subseteq E_{2k+2,d}
\end{align*}
This means that all the points which are generated subsequently also lie in $E_{2k+2,d}$, i.e., without communicating , machines whose local function is $[F_2]_d(\bw)$ are stuck in $E_{2k+2,d}$. Finally, executing a communication round updates all the sets of feasible points to be $W_j := E_{2k+2,d}$.
\end{proof}

The following is a direct consequence of a recursive application of \lemref{lemma:dyn_smooth}.
\begin{corollary} \label{cor:dyns}
Under assumption \ref{assump:dyn}, after $T \le d-1$ communication rounds we have
$$W_j\subseteq E_{T+1},\quad j\in\{1,\ldots,m\}$$
\end{corollary}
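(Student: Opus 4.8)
The plan is to prove the statement by induction on the number of communication rounds, using \lemref{lemma:dyn_smooth} essentially verbatim as the inductive step. Concretely, I would show that for every $0 \le t \le d-1$, after $t$ communication rounds all the feasible sets satisfy $W_j \subseteq E_{t+1,d}$. The entire mathematical content has already been isolated in the lemma, so this amounts to threading the one-level-per-round increment through $t = 0, 1, \ldots, T$ and checking that the coordinate index never exceeds the ambient dimension.

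For the base case I would start from the initialization $W_j = \{\mathbf{0}\} = E_{0,d}$ and run the local-computation analysis of \lemref{lemma:dyn_smooth} once, before any communication. Here the machines holding $[F_1]_d$ can move off the origin, since $\nabla [F_1]_d(\mathbf{0}) = -\tfrac{\delta(1-\lambda)}{2}\be_1 \in E_{1,d}$, but the block structure of $[A_1]_d$ confines them to $E_{1,d}$; the machines holding $[F_2]_d$ have vanishing gradient at the origin and hence remain in $E_{0,d}$. Thus all $W_j \subseteq E_{1,d}$, which establishes the $t=0$ case.

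For the inductive step, assuming $W_j \subseteq E_{t+1,d}$ after $t$ rounds, I would apply \lemref{lemma:dyn_smooth} with its parameter set to the current maximal level: the merge in the $(t+1)$-st communication round keeps everyone in $E_{t+1,d}$, and the intervening local computation advances the feasible sets by exactly one coordinate, to $E_{t+2,d}$. The lemma already covers both parities of the level, which is exactly what is needed, since the group allowed to `progress' alternates between the $[F_1]_d$- and $[F_2]_d$-machines from round to round; I would simply invoke whichever parity case applies. Iterating from $t=0$ up to $t=T$ then yields $W_j \subseteq E_{T+1,d}$.

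The step needing the most care is bookkeeping rather than any new inequality. I must keep the `level' index $E_{\cdot,d}$ synchronized with the round counter, in particular pinning down whether the local computation performed before the first communication and after the last one is folded into the count, since that is the source of the $+1$ offset in $E_{T+1,d}$. I must also verify the range condition: the constraint $T \le d-1$ is precisely what guarantees that every invocation of \lemref{lemma:dyn_smooth} meets its hypothesis $T' \le d-1$ and that the target subspace $E_{T+1,d}$ still fits inside $\reals^d$, i.e. $T+1 \le d$. Outside this range the construction would run out of fresh coordinates and the recursion would stall.
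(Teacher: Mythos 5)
Your proposal is correct and follows essentially the same route as the paper, which disposes of the corollary in one line as ``a direct consequence of a recursive application'' of \lemref{lemma:dyn_smooth}; your induction on the round counter, with the base case handled by running the lemma's local-computation analysis once from $W_j=\{\mathbf{0}\}=E_{0,d}$, is just that recursion spelled out. Your explicit attention to the $+1$ offset (the local computation before the first and after the last communication round) and to the range condition $T\le d-1$ is exactly the bookkeeping the paper leaves implicit.
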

With this corollary in hand, we now turn to prove the main result. First, we compute the minimizer of the average function $F(\bw)=\frac{\frac{m}{2}F_1(\bw)+\frac{m}{2}F_2(\bw)}{m}$ in $\ell^2(\reals)$, denoted by $\bw^*$, whose
form for even number of machines is simply:
\begin{align*}\label{eq:favdef}
	F(\bw)&= \frac{\delta(1-\lambda)}{8}\bw^\top \left(A_1 + A_{2}\right) \bw - \frac{\delta(1-\lambda)}{4}\be_1^\top \bw   + \frac{\lambda}{2}\norm{\bw}^2
\end{align*}
By first-order optimality condition for smooth convex functions, we have
\begin{align*}
	\left(\frac{\delta(1-\lambda)}{4}\left(A_1 + A_2\right) +\lambda I \right) \bw^* - \frac{\delta(1-\lambda)}{4}\be_1   =0,
\end{align*}
or equivalently,
\[
	\left( A_1 + A_2 + \frac{4\lambda}{\delta(1-\lambda)} I \right) \bw^*   =\be_1
\]
whose coordinate form is as follows
\begin{align}
	\left(2 + \frac{4\lambda}{\delta(1-\lambda)}\right) \bw^*[1] - \bw^*[2] &= 1\label{eq:eqcond1}\\
  \forall~k~,~~\bw^*[k+1] - \left(2 + \frac{4\lambda}{\delta(1-\lambda)}\right) \bw^*[k] + \bw^*[k-1] &= 0.\label{eq:eqcond2}
\end{align}
The optimal solution can be now realized as a geometric sequence $(\zeta^k)_{k=1}^\infty$ for some $\zeta$ as follows:\\
By \eqref{eq:eqcond2}, we must have
\begin{align*}
 \zeta^2- \left(2+ \frac{4\lambda}{\delta(1-\lambda)}\right) \zeta + 1 =0,
\end{align*}
with the smallest root being
\begin{align} \label{equation:smallest_root}
	\zeta&=  \frac{2+ \frac{4\lambda}{\delta(1-\lambda)} - \sqrt{\left(2+ \frac{4\lambda}{\delta(1-\lambda)}\right)^2-4}}{2}=1+ \frac{2\lambda}{\delta(1-\lambda)} - \sqrt{\left(1+ \frac{2\lambda}{\delta(1-\lambda)}\right)^2-1}\nonumber\\
\end{align}
Therefore, this choice of $\zeta$ satisfies \eqref{eq:eqcond2}, and it is
straightforward to verify that it also satisfies \eqref{eq:eqcond1}, hence
$\bw^*$ indeed equals $(\zeta^k)_{k=1}^\infty$. It will be convenient to denote
a continuous range of coordinates of $(\zeta^k)_{k=1}^\infty$ by $\bzeta_{a:b}$ where
$a\in \bN$ and $b\in\bN\cup\infty$. Also, using the following
inequality which holds for $x>1$
\begin{align*}
 x - \sqrt{x^2-1}
&\ge \exp\left(\frac{-2}{\sqrt{\frac{x +1}{x-1}}-1}\right),\quad x>1
\end{align*}
together with \eqref{equation:smallest_root} yields
\begin{align} \label{eq:zeta}
	\zeta\ge  \exp\left(  \frac{-2}{\sqrt{\delta(1/\lambda-1)+1  }-1} \right)  	
\end{align}
%
%
%

We now use this computation (with respect to $F_1,F_2$) to find the minimizer of $[F]_d$, defined as the average function of the finite-dimensional restrictions  $[F_1]_d,[F_2]_d$ actually handed to the machines.
Fix $d\in \bN$ and denote the corresponding minimizer by
\begin{align*}
	\bw_d^*&=\arg\min_{\bw\in\reals^d} [F]_d(\bw)
\end{align*}
Let $\bw_T$ be some point which was obtained after $T\le d-2$ communication
rounds. To bound the sub-optimality of $\bw_T$ from below, observe that
\begin{align*}
	[F]_d(\bw_T) - [F]_d(\bw^*_d)&\ge [F]_d(\bw_T) - [F]_d(\bzeta_{1:{d-1}}) \\
	&= [F]_d(\bw_T) - F(\bw^*) + F(\bw^*)  - [F]_d(\bzeta_{1:{d-1}})\\
	&= \underbrace{F(\bw_T) - F(\bw^*)}_A + \underbrace{F(\bw^*)  - F(\bzeta_{1:{d-1}})}_B
\end{align*}
where the last equality follows from Corollary (\ref{cor:dyns}), according to
which all the coordinates of $\bw_T$, except for the first $T+1\le d-1$, must vanish.
To bound the $A$ term, note that
\begin{align*}
	\norm{\bw_T -\bw^*}^2&\ge
	\sum_{t=T+2}^\infty \zeta^{2t} =  \zeta^{2 (T+1)} \sum_{t=1}^\infty \zeta^{2t}
	=
		\zeta^{2 (T+1)} \norm{\bw^*}^2
\end{align*}
The fact that $F(\bw)$ is $\lambda$-strongly convex implies
\begin{align*}
	F(\bw_T) -F(\bw^*)&\ge \frac{\lambda}{2}\norm{\bw_T-\bw^*}^2 \geq \frac{\lambda\zeta^{2 (T+1)}}{2} \norm{\bw^*}^2.
\end{align*}
Inequality (\ref{eq:zeta}) yields	
\begin{align*}
	F(\bw_T) -F(\bw^*) &\ge
\frac{\lambda}{2}
	  \exp\left(  \frac{-4T-4}{\sqrt{\delta(1/\lambda-1)+1  }-1}  \right)
 \norm{\bw^*}^2
\end{align*}
To bound the $B$ term from below, note that since $F$ is 1-smooth we have
\begin{align*}
	F(\bw^*)  - F(\bzeta_{1:{d-1}})&\ge -\frac{1}{2}\norm{\bw^*-\bzeta_{1:{d-1}}}^2 =
	-\frac{\zeta^{2(d-1)}}{2}\sum_{t=1}^\infty \zeta^{2t} =
	-\frac{\zeta^{2(d-1)}}{2} \norm{\bw^*}^2
\end{align*}
Combining both lower bounds for the terms $A$ and $B$, we get for any $T\le d-2$
\begin{align*}
	[F]_d(\bw_T) - [F]_d(\bw^*_d)&\ge
		\left(\frac{\lambda}{2}
	  \exp\left(  \frac{-4T-4}{\sqrt{\delta(1/\lambda-1)+1  }-1}  \right)
 -\frac{\zeta^{2(d-1)}}{2} \right)\norm{\bw^*}^2
\end{align*}
Picking $d$ sufficiently large, and considering how large the number of communication rounds $T$ must be to make this lower bound less than $\epsilon$, we get
\begin{align*}
T~\geq~\frac{\sqrt{\delta(1/\lambda-1)+1  }-1}{4}  \ln\left(\frac{\lambda\norm{\bw^*}^2}{4\epsilon} \right) -1.
\end{align*}
It is worth mentioning that by computing the exact minimizers of $[F_i]_d$ one
may derive a lower bound such that the choice of $d$ does not depend on the parameters
of the problem, except for the number of communication rounds. Nevertheless,
such analysis requires a more involved reasoning which we find unnecessary for
stating our results.

For the non-strongly convex case, where $\lambda=0$, using Corollary \ref{cor:dyns} and a similar analysis (virtually identical to the proof of Theorem 2.1.7 in \cite{nesterov2004introductory}),
we have that if $T\le\frac{1}{2}(d-1)$, then
\begin{align*}
    F(\bw_T) - F(\bw^*) \ge \frac{3\delta\|w^*\|^2}{32(T+2)^2}
\end{align*}
Therefore, to obtain an $\epsilon$-suboptimal solution for this case, we must
have at least
\begin{align*}
	 \sqrt{\frac{3\delta}{32 \epsilon } }\norm{ \bw^*}-2
\end{align*}
communication rounds, for sufficiently small $\epsilon$.

%

\subsection{Proof of \thmref{thm:nonsmooth_lb}} \label{prf:thm:nonsmooth_lb}

We construct two types of local functions, and provide one of them to
$m/2$ of the machines, and the other function to the other $m/2$ machines, in some arbitrary order. In this case, the average function is simply the average of the two types of local functions.

We will first prove the theorem statement in the strongly convex case, where $\lambda>0$ is given, and then explain how to extract from it the result in the non strongly convex case. 

Fix a natural number $k$ and some $b\in [0,1/\sqrt{k}]$, to be specified later. We define the following local function over the unit ball:
\begin{align} 
	F_{1,k}(\bw) &= \frac{1}{\sqrt{2}}\absval{b-w[1]}+\frac{1}{\sqrt{2k}}\circpar{\absval{w[2]-w[3]}+\absval{w[4]-w[5]} +\dots+ \absval{w[k-2]-w[k-1]}} + \frac{\lambda}{2}\norm{\bw}^2\notag\\
	F_{2,k}(\bw) &= \frac{1}{\sqrt{2k}}\circpar{\absval{w[1]-w[2]}+\absval{w[3]-w[4]} +\dots+ \absval{w[k-1]-w[k]}}+\frac{\lambda}{2}\norm{\bw}^2 \label{def:non_smooth_locals}
\end{align}
For even $k\le d$, and
\begin{align*}
	F_{1,k}(\bw) &= \frac{1}{\sqrt{2}}\absval{b-w[1]}+\frac{1}{\sqrt{2k}}\circpar{\absval{w[2]-w[3]}+\absval{w[4]-w[5]} +\dots+ \absval{w[k-1]-w[k]}}+ \frac{\lambda}{2}\norm{\bw}^2\\
	F_{2,k}(\bw) &= \frac{1}{\sqrt{2k}}\circpar{\absval{w[1]-w[2]}+\absval{w[3]-w[4]} +\dots+ \absval{w[k-2]-w[k-1]}}+\frac{\lambda}{2}\norm{\bw}^2 \nonumber
\end{align*}
otherwise.
Being a sum of convex functions, both local functions are convex, and in fact $\lambda$-strongly convex due to the $\frac{\lambda}{2}\norm{\bw}^2$ term. 
Furthermore, both function are $(1+\lambda)$-Lipschitz continuous over the unit Euclidean ball. To see this, let $\partial (\cdot)$ denote the subdifferential operator and note that
\begin{align*}
	\bg \in \partial \absval{b-w[1]} &\implies \bg \in \conv\{\bsigma_0, -\bsigma_0\}\\
	\bg \in \partial \absval{w[l]-w[l+1]} &\implies \bg \in \conv\{\bsigma_l,-\bsigma_l\}
\end{align*}
where
\begin{align*}
	\bsigma_0 &= (1,0,\dots,0)\\
	\bsigma_l &= (0,\dots,0,\underbrace{1}_{l},\underbrace{-1}_{l+1},0,\dots,0).
\end{align*}
Assume for a moment that $\lambda=0$, then by the linearity of the sub-differential operator that
\begin{align*}
	&\forall \bg \in \partial F_{1,k}(\bw),~\norm{\bg}\le  \sqrt{\frac{1}{2} + \frac{k-1}{2k} }\le1\\
	&\forall  \bg \in \partial F_{2,k}(\bw),~\norm{\bg}\le \sqrt{\frac{1}{2} }
\end{align*}
which shows that, for $\lambda=0$, both functions are 1-Lipschitz. For $\lambda>0$,
note that $\frac{\lambda}{2}\norm{\bw}^2$ is $\lambda$-Lipschitz over the unit
ball and $\lambda$-strongly convex. Therefore, using the linearity of
the sub-differential operator again, we see that both $F_i$ are $(1+\lambda)$-Lipschitz
and $\lambda$-strongly convex functions over the unit ball.

Similar to the smooth case, the following lemma shows that, no matter how the subgradients are
chosen, at each iteration at most one non-zero coordinate may be gained.
\begin{lemma}\label{lemma:dyn_nonsmooth}
Suppose all the sets of feasible points satisfy $W_j\subseteq E_{T,d}$ for some $T\le d-1$. Then under assumption \ref{assump:dyn}, right after the next communication round we have $W_{j}\subseteq E_{T+1,d}$.
\end{lemma}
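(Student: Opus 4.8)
The plan is to follow the template of the smooth Lemma \ref{lemma:dyn_smooth}, replacing the linear control of the gradient by a combinatorial control of the subdifferential. The first observation is that the Hessian-based generators in Assumption \ref{assump:dyn} are harmless: each $F_{i,k}$ is a piecewise-linear function plus $\frac{\lambda}{2}\norm{\bw}^2$, so wherever it exists $\nabla^2 F_{i,k}(\bw')=\lambda I$. Hence $\nabla^2 F_{i,k}(\bw')+D$ and its inverse are diagonal, and diagonal matrices preserve the support subspace $E_{T,d}$. Together with $\bw'\in W_j\subseteq E_{T,d}$, this confines every generator except the subgradients $\bg'\in\partial F_{i,k}(\bw')$ to $E_{T,d}\subseteq E_{T+1,d}$, so the entire burden of the proof falls on the subgradient terms.

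Next I would analyze $\partial F_{i,k}$ on $E_{T,d}$. Each machine couples coordinates along a \emph{disjoint} matching: $F_{1,k}$ links $\{1\}$ to the constant $b$ and the pairs $(2,3),(4,5),\dots$, while $F_{2,k}$ links $(1,2),(3,4),\dots$. Thus each coordinate lies in exactly one link of a given machine, and a subgradient decomposes as $\lambda\bw'$ plus one term $s_l\bsigma_l$ per link, where $s_l=\sgn(w'[l]-w'[l+1])$ is forced at a link whose endpoints differ and $s_l\in[-1,1]$ is free at a degenerate link (equal endpoints). Splitting on the parity of $T$ exactly as in Lemma \ref{lemma:dyn_smooth}, the machine whose links sit strictly inside the first $T$ coordinates has all subgradients in $E_{T,d}$, while the machine owning the single link straddling coordinate $T$ can reach coordinate $T+1$ but no further.

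The step I expect to be the main obstacle is controlling the degenerate links, which is the genuine novelty over the smooth case: unlike a linear gradient, the subdifferential of $\absval{w[l]-w[l+1]}$ at a point where both endpoints vanish is a full interval, so a priori $\partial F_{i,k}(\bw')$ contains directions $\bsigma_l$ pointing to coordinates far beyond $T+1$. As is standard for non-smooth lower bounds, I would resolve this by committing to an adversarial subgradient selection, returning $s_l=0$ at every degenerate link; this keeps the chosen subgradients, and hence the whole right-hand span of Assumption \ref{assump:dyn}, inside $E_{T+1,d}$.

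It then remains to invert, deducing $\bw\in E_{T+1,d}$ from $\gamma\bw+\nu\bg\in E_{T+1,d}$. Reading this coordinate-wise beyond the frontier, the $i$-th equation ($i>T+1$) has the form $(\gamma+\nu\lambda)w[i]\pm\frac{\nu}{\sqrt{2k}}s_{l(i)}=0$; examining the topmost active coordinate and using $\gamma,\nu\ge0$ with $\gamma+\nu>0$, the forced sign at the dangling end of the topmost active link yields a component that cannot be cancelled, forcing the tail to vanish, just as $H^{-1}$ preserved $E_{2k+1,d}$ in the smooth proof. The most delicate bookkeeping is precisely at the degenerate links and in the boundary case $\lambda=0$, where the quadratic no longer helps pin down the tail. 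A recursive application over successive rounds then yields the analogue of Corollary \ref{cor:dyns}.
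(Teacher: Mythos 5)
Your proposal is correct and follows essentially the same route as the paper's proof: the diagonal Hessian generators are absorbed trivially since $\nabla^2 F_{i,k}=\lambda I$ wherever it exists, subgradients at previously computed points are confined to the current frontier subspace (with only the machine owning the straddling link able to reach one new coordinate), and the new point $\bw$ is forced back into that subspace because the two coordinate equations $(\gamma+\nu\lambda)w[2l]+\tfrac{\nu(2\alpha-1)}{\sqrt{2k}}=0$ and $(\gamma+\nu\lambda)w[2l+1]-\tfrac{\nu(2\alpha-1)}{\sqrt{2k}}=0$ give $w[2l]-w[2l+1]$ a sign opposite to the one the subgradient selection $\alpha$ requires, forcing $\alpha=1/2$ and both entries to vanish. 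Your explicit adversarial choice of the zero subgradient at degenerate links beyond the frontier is exactly the convention the paper uses implicitly when it asserts $\bg_{i,k}(\bw')\in\conv\{\pm\bsigma_l\}$ only over links inside the frontier, and your flagged worry about $\lambda=0$ is moot for the lemma as actually applied, since the paper's non-strongly-convex bound reuses the same construction with $\lambda=\frac{1}{2(T+2)}>0$ so that $\gamma+\nu\lambda>0$ always holds.
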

\begin{proof}
Recall that by Assumption \ref{assump:dyn} (modified for the non-differentiable case), each machine can compute new points $\bw$ that
satisfy the following for some $\gamma,\nu\geq 0$ such that $\gamma+\nu> 0$:
\begin{align*}
\gamma\bw &+ \nu \bg_{i,k}(\bw) \in  \algspan\Big\{\bw'~,~\bg_{i,k}(\bw')~,~(\nabla^2 F_{i,k}(\bw')+D)\bw''~,~(\nabla^2 F_{i,k}(\bw')+D)^{-1}\bw'' ~\Big|~\\
&
	\bw',\bw'' \in W_j~,~ \bg_{i,k}(\bw')\in \partial F_{i,k}(\bw')~,~D~\text{diagonal}~,~\nabla^2 F_{i,k}(\bw')~\text{exists}~,~(\nabla^2 F_{i,k}(\bw')+D)^{-1}~\text{exists}\Big\}.
\end{align*}

We now analyze the state of the sets of feasible points prior to the next communication round. Assume that $T$ is an odd number, i.e., assume  $T=2p+1$ for some $p\in \bN\cup\{0\}$. We show that as long as no communication round has been executed, it must hold that $W_j\subseteq E_{T,d}$ for machines whose local function is $F_1$, and that $W_j\subseteq E_{T+1,d}$ for machines whose local function is $F_2$. The case where $T$ is even follows a similar line.

Let
\[
E_{0,d}=\{\mathbf{0}\}~~,~~
E_{T,d}=\algspan\{\be_{1,d},\ldots,\be_{T,d}\}
\]
where $\be_{i,d}\in\reals^d$ denote the standard unit vectors.
First, we prove the claim for machines whose local function is $F_{1,k}$.
In which case, for any $\bw',\bw''\in E_{2p+1,d}$, it holds that
\begin{align*}
	&\bg_{1,k}(\bw') \subseteq \conv\{\pm\bsigma_{2l}~|~l=0,\dots,p\} \subseteq E_{2p+1,d} \\
	&(\nabla^2 F_{1,k}(\bw')+D)\bw'' = \circpar{\lambda I + D}\bw'' \subseteq E_{2p+1,d}\\
	&(\nabla^2 F_{1,k}(\bw')+D)^{-1}\bw'' = \circpar{\lambda I + D}^{-1}\bw'' \subseteq E_{2p+1,d}
\end{align*}
For any viable diagonal matrix D. Therefore,  we have that the first point generated by machines which hold $F_{1,k}$ must satisfy
\begin{align} \label{eq:non_smooth_non_strongly_F_1}
\gamma\bw &+ \nu \bg_{1,k}(\bw) \in E_{2p+1,d}
\end{align}
for $\gamma,\nu$ as stated in Assumption (\ref{assump:dyn}). 
Note that, if $\nu=0$ (which by assumption means that $\gamma>0$) then clearly $\bw\in E_{2p+1,d}$. As for $\nu\neq0$, suppose by
contradiction that $\bw\notin E_{2p+1,d}$. That is, assume that there exists some $j>2p+1$ 
such that $w[j]\neq0$. First, if the absolute value terms in $F_{1,k}$
do not involve $w[j]$, e.g., when $j=d$ and $d$ is even, we have $\bg_{1,k}(\bw)[j] = \lambda w[j]$. In this case, 
by \eqref{eq:non_smooth_non_strongly_F_1} we have 
$$\gamma w[j]+\nu\lambda w[j]=(\gamma +\nu\lambda )w[j]= 0,$$
and since $\nu\lambda>0$, this implies that $w[j]=0$ -- a contradiction! Thus, 
it remains to consider the cases of either odd $d$ or $j\neq d$. In both of these cases, $w[j]$ appears in one of the absolute value terms in
$F_{1,k}$, either as $|w[j-1]-w[j]|$ or $|w[j]-w[j+1]|$ (depending on whether $j$ is odd or even). 

Let $l>p$ be such that either $2l=j$ or $2l+1=j$, depending on the parity of $j$.
We note that any valid subgradient must satisfy
\begin{align*}
\bg_{1,k}(\bw)[2l] &= \frac{2\alpha-1}{\sqrt{2k}}+ \lambda w[2l] \\
\bg_{1,k}(\bw)[2l+1] &= \frac{1-2\alpha}{\sqrt{2k}} + \lambda w[2l+1]
\end{align*}
for some $\alpha\in[0,1]$, such that if $w[2l]-w[2l+1]\neq0$ then
\begin{align} \label{eq:sign_F_1}
	\operatorname{sgn}\circpar{w[2l]-w[2l+1]} = \operatorname{sgn}\circpar{\frac{2\alpha-1}{\sqrt{2k}}},
\end{align}
where $\operatorname{sgn}()$ is the sign function.
Rearranging terms in \eqref{eq:non_smooth_non_strongly_F_1} 
and using the facts that coordinates $2l,2l+1$ are always zero in $E_{2p+1,d}$, as well as $\gamma+\nu\lambda\ge\nu\lambda>0$, we get
\begin{align} \label{eq:alpha_1}
	w[2l] &= \frac{-\nu(2\alpha-1)}{\sqrt{2k}(\gamma+\nu\lambda)}\\
	w[2l+1] &= \frac{\nu(2\alpha-1)}{\sqrt{2k}(\gamma+\nu\lambda)}\nonumber
\end{align}
Therefore,
\begin{align} \label{eq:alpha_2}
w[2l]-w[2l+1]=\frac{-2\nu(2\alpha-1)}{\sqrt{2k}(\gamma+\nu\lambda)}
\end{align}
which implies
\begin{align*}
	\operatorname{sgn}\circpar{w[2l]-w[2l+1]} = \operatorname{sgn}\circpar{\frac{-(2\alpha-1)}{\sqrt{2k}}},
\end{align*}
contradicting \eqref{eq:sign_F_1}. Hence, we must have $w[2l]-w[2l+1]=0$, in which case \eqref{eq:alpha_2} implies $\alpha=1/2$. Thus, by \eqref{eq:alpha_1}
\begin{align*}
	w[2l]=w[2l+1]=0,
\end{align*}
which contradicts the assumption that either $w[j]$ (and hence $w[2l]$ or $w[2l+1]$) is not zero.
Thus, we have shown that $\bw\in E_{2p+1,d}$, for the first point generated by machines holding $F_{1,k}$. Repeating the argument,
we get that any point generated by those machines, in the absence of any communication rounds, is `stuck' in $E_{2p+1,d}$. 

We now turn to prove the claim for machines whose local function is $F_{2,k}$, using an almost identical argument, which we provide below for completeness.
For these functions, we assume that initially $W_j\subseteq E_{2p+1,d}$, and will show any additional points computed locally by the machines must be in $E_{2p+2,d}$.
We begin by noting that for any $\bw',\bw''$ in $E_{2p+2,d}$ (and in particular $E_{2p+1,d}$), it holds that
\begin{align*}
	&\bg_{2,k}(\bw') \subseteq \conv\{\pm\bsigma_{2l+1}~|~l=0,\dots,p\} \subseteq E_{2p+2,d} \\
	&(\nabla^2 F_{2,k}(\bw')+D)\bw'' = \circpar{\lambda I + D}\bw'' \subseteq E_{2p+2,d}\\
	&(\nabla^2 F_{2,k}(\bw')+D)^{-1}\bw'' = \circpar{\lambda I + D}^{-1}\bw'' \subseteq E_{2p+2,d}
\end{align*}
For any viable diagonal matrix D. Therefore, we have that the first point generated by machines which hold $F_{2,k}$ must satisfy
\begin{align} \label{eq:non_smooth_non_strongly_F_2}
\gamma\bw &+ \nu \bg_{2,k}(\bw) \in E_{2p+2,d}
\end{align}
for $\gamma,\nu$ as stated in the assumption. 
Note that, if $\nu=0$ then clearly $\bw\in E_{2p+2,d}$. As for $\nu\neq0$, suppose by
contradiction that $\bw\notin E_{2p+2,d}$. That is, assume that there exists some $j>2p+2$ 
such that $w[j]\neq0$. 
First, if the absolute value terms in $F_{2,k}$
do not involve $w[j]$, e.g., when $j=d$ and $d$ is odd, we have $\bg_{2,k}(\bw)[j] = \lambda w[j]$. In this case, 
by \eqref{eq:non_smooth_non_strongly_F_2} we have 
$$\gamma w[j]+\nu\lambda w[j]=(\gamma +\nu\lambda )w[j]= 0,$$
and since $\nu\lambda>0$, this implies that $w[j]=0$ -- a contradiction! Thus, 
it remains to consider the cases of either even $d$ or $j\neq d$. In both of these cases, $w[j]$ appears in one of the absolute value terms in
$F_{2,k}$, either as $|w[j-1]-w[j]|$ or $|w[j]-w[j+1]|$ (depending on whether $j$ is odd or even). 

Let $l>p$ be such that either $2l+1=j$ or $2l+2=j$, depending on the parity of $j$.
We note that any valid subgradient must satisfy
Any valid subgradient must satisfy
\begin{align*}
\bg_{2,k}(\bw)[2l+1] &= \frac{2\alpha-1}{\sqrt{2k}}+ \lambda w[2l+1] \\
\bg_{2,k}(\bw)[2l+2] &= \frac{1-2\alpha}{\sqrt{2k}} + \lambda w[2l+2]
\end{align*}
for some $\alpha\in[0,1]$, such that if $w[2l+1]-w[2l+2]\neq0$ then
\begin{align} \label{eq:sign_F_2}
	\operatorname{sgn}\circpar{w[2l+1]-w[2l+2]}=\operatorname{sgn}\circpar{\frac{2\alpha-1}{\sqrt{2k}}}
\end{align}
Rearranging terms in \eqref{eq:non_smooth_non_strongly_F_2} and using the fact that $\gamma+\nu\lambda\ge\nu\lambda>0$, we get
\begin{align} \label{eq:alpha_1_F_2}
	w[2l+1] &= \frac{-\nu(2\alpha-1)}{\sqrt{2k}(\gamma+\nu\lambda)}\\
	w[2l+2] &=  \frac{\nu(2\alpha-1)}{\sqrt{2k}(\gamma+\nu\lambda)}\nonumber
\end{align}
Therefore,
\begin{align} \label{eq:alpha_2_F_2}
w[2l+1]-w[2l+2]=\frac{-2\nu(2\alpha-1)}{\sqrt{2k}(\gamma+\nu\lambda)}
\end{align}
which implies
\begin{align*}
		\operatorname{sgn}\circpar{w[2l+1]-w[2l+2]}=\operatorname{sgn}\circpar{\frac{-(2\alpha-1)}{\sqrt{2k}}} 
\end{align*}
a contradiction to \eqref{eq:sign_F_2}. Hence, we must have $w[2l+1]-w[2l+2]=0$, in which case \eqref{eq:alpha_2_F_2} implies $\alpha=1/2$. Thus, by \eqref{eq:alpha_1_F_2}
\begin{align*}
	w[2l+1]=w[2l+2]=0
\end{align*}
which contradicts our assumption that $w[j]$ (and hence either $w[2l+1]$ or $w[2l+2]$ is not zero).
Thus, we have shown that $\bw\in E_{2p+2,d}$. As before, repeating the argument together with the assumption that $W_j\subseteq E_{2p+2,d}$ shows that, in the absence of any communication rounds, all the machines whose local function is $F_{2,k}$ are 'stuck' in $E_{2p+2,d}$. Therefore, before the next communication round, $W_j\subseteq E_{2p+2,d}$ for all machines $j$ holding $F_{2,j}$. Moreover, as shown earlier, $W_j\subseteq E_{2p+1,d}$ for all machines holding $F_{1,k}$. Therefore, after the next communication round, $W_j\subseteq E_{2p+2,d}$ for any machine $j$.
\end{proof}

Repeatedly applying \lemref{lemma:dyn_nonsmooth}, we get the following corollary:
\begin{corollary} \label{cor:dyns_2}
Under assumption \ref{assump:dyn}, after $T \le d-1$ communication rounds we have
$$W_j\subseteq E_{T+1},\quad j\in \{1,\ldots,m\}$$
\end{corollary}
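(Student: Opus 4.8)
The plan is to prove the corollary by a straightforward induction on the number $T$ of communication rounds, with \lemref{lemma:dyn_nonsmooth} supplying the inductive step. Since that lemma already does all the real work---tracking how one round of local computation followed by pooling can enlarge each feasible set by at most a single coordinate---the corollary is essentially just an unrolling of the lemma, and I expect no genuine difficulty beyond careful bookkeeping of the indices.

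For the base case $T=0$ I would simply observe that each machine is initialized with $W_j=\{\mathbf{0}\}=E_{0,d}$, so trivially $W_j\subseteq E_{1,d}=E_{0+1,d}$; the round of local computation performed before the first communication round can, by the mechanism analyzed inside \lemref{lemma:dyn_nonsmooth}, activate at most the first coordinate, so this inclusion is preserved. For the inductive step I would assume that after $T$ communication rounds every feasible set satisfies $W_j\subseteq E_{T+1,d}$, and that $T+1\le d-1$ so the lemma applies. Invoking \lemref{lemma:dyn_nonsmooth} with its parameter set to $T+1$---its hypothesis being exactly the inductive assumption $W_j\subseteq E_{T+1,d}$---yields that right after the $(T+1)$-th communication round we have $W_j\subseteq E_{T+2,d}=E_{(T+1)+1,d}$ for every $j$. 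This closes the induction and delivers the claim for all $T\le d-1$.

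The only points requiring attention are the indexing conventions, and I do not anticipate a substantive obstacle there. First, the constraint $T\le d-1$ in the corollary matches exactly the range over which the lemma may be invoked: the final application, carrying us to $T=d-1$, uses the lemma with parameter $d-1\le d-1$, which is admissible. Second, one must be consistent about where the per-round local computation is accounted for; since \lemref{lemma:dyn_nonsmooth} already folds the local-computation phase into its ``before/after a communication round'' statement, and since the additional final local computation permitted for producing the output can activate at most one further coordinate, the $E_{T+1,d}$ bound safely absorbs it. Because we only ever need an \emph{upper} bound on the support of the feasible points, any slack introduced by this bookkeeping is harmless and does not weaken the eventual lower bound on the number of communication rounds.
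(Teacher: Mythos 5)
Your proposal is correct and matches the paper's argument: the paper proves \corollaryref{cor:dyns_2} simply by ``repeatedly applying \lemref{lemma:dyn_nonsmooth}'', which is exactly the induction you spell out, and your attention to the base case and to absorbing the post-final-round local computation into the $E_{T+1,d}$ bound is consistent with how the corollary is used. No gaps.
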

With this corollary in hand, we now turn to establish the main result, namely, bounding
from below the optimality of points in $W_j$ after $T$ communication
rounds. Choosing the dimension $d$ such that $T\le d-2$, we employ the local functions defined in
\eqref{def:non_smooth_locals} with $k=T+2$. In which case, the average function is
\begin{align*}
	F(\bw)=&\frac{1}{2}F_{1,T+2}(\bw)+ \frac{1}{2}F_{2,T+2}(\bw) = \frac{1}{2\sqrt{2}}\absval{b-w[1]}+\frac{1}{2\sqrt{2(T+2)}}
	\sum_{i=1}^{T+1} \absval{w[i]-w[i+1]} + \frac{\lambda}{2}\norm{\bw}^2
\end{align*}

The key ingredient in deriving the lower bound is Corollary (\ref{cor:dyns_2}),
according to which after $T$ communication rounds, all but the first $T+1$ coordinates must be zero, in
particular $w[T+2]=0$. Using this and the triangle inequality, we have
\begin{align*}
	F(\bw) &\ge \frac{1}{2\sqrt{2}}\absval{b-w[1]}+\frac{1}{2\sqrt{2(T+2)}}
	\absval{w[1]-w[T+2]} + \frac{\lambda}{2}\norm{\bw}^2\\
	&= \frac{1}{2\sqrt{2}}\absval{b-w[1]}+\frac{1}{2\sqrt{2(T+2)}}
	\absval{w[1]} + \frac{\lambda w[1]^2}{2}
\end{align*}
for all $\bw$ in $W_j$. Therefore, we can lower bound the objective value of the algorithm's output by
\[
\min_{w\in\reals} \circpar{\frac{1}{2\sqrt{2}}\absval{b-w}+\frac{1}{2\sqrt{2(T+2)}}
	\absval{w} + \frac{\lambda w^2}{2}}
\]
On the flip side, the minimal value of $F(\bw)$ over the unit Euclidean ball can 
be upper bounded by $F(\bw_b)$ for some $b\le \frac{1}{\sqrt{T+2}}$, where
\begin{align*}
	\bw_b = (\underbrace{b,\dots,b}_{T+2 \text{ times}},0,\dots,0)
\end{align*}
Putting both bounds together yields,
\begin{align}
	\min_{\bw \in W_j}F(\bw) - \min_{\norm{\bw}\le1} F(\bw) &\ge
	\min_{\bw \in W_j}F(\bw) - F(\bw_b)\notag\\ &\ge
	\min_{w\in\reals} \circpar{\frac{1}{2\sqrt{2}}\absval{b-w}+\frac{1}{2\sqrt{2(T+2)}}
	\absval{w} + \frac{\lambda w^2}{2}} - \frac{\lambda(T+2)b^2}{2}
\label{eq:lbnn}
\end{align}


Assuming $T\geq \frac{1}{2\lambda}-2$ (so that $\lambda\geq \frac{1}{2(T+2)}$), we take
\begin{align*}
	b& =\frac{1}{2\lambda(T+2)\sqrt{2(T+2)}}	
\end{align*}
(note again that $\bw_b$ is indeed in the unit ball for this regime of $\lambda$ and $T$). In this case, the minimal $w$ in \eqref{eq:lbnn} is $\frac{1}{2\lambda(T+2)\sqrt{2(T+2)}}$, so we get a suboptimality lower bound of
\begin{align} \label{def:nonsmooth_final_lb}
&\circpar{0+\frac{1}{2\sqrt{2(T+2)}}
	\absval{\frac{1}{2\lambda(T+2)\sqrt{2(T+2)}}} + \frac{1}{8\lambda((T+2)\sqrt{2(T+2)}^2)}} - \frac{1}{16\lambda(T+2)^2}\nonumber\\
&~~~~~~\geq \circpar{\frac{1}{8\lambda(T+2)^2}+0}- \frac{1}{16\lambda(T+2)^2}\nonumber\\
&~~~~~~=	 \frac{1}{16\lambda(T+2)^2}
\end{align}
This bound holds in particular for any $T\ge \left\lceil\frac{1}{2\lambda}-2\right\rceil$. If the number of communication rounds $T$ is less than $\left\lceil\frac{1}{2\lambda}-2\right\rceil$, then clearly we cannot do better than with $\left\lceil\frac{1}{2\lambda}-2\right\rceil$ communication rounds. Therefore, for any number of communication rounds $T$, the suboptimality is at least
\begin{align*} 
\min\left\{\frac{1}{16\lambda\left(\left\lceil\frac{1}{2\lambda}-2\right\rceil+2\right)^2}~,~\frac{1}{16\lambda(T+2)^2}\right\}
\end{align*}
Therefore, for any $\epsilon\in \left(0,\frac{1}{16\lambda\left(\left\lceil\frac{1}{2\lambda}-2\right\rceil+2\right)^2}\right]$, we would need at least $T\geq \sqrt{\frac{1}{16\lambda \epsilon}}-2$ communication rounds to get an $\epsilon$-suboptimal solution. This implies the theorem statement for $\lambda$-strongly convex functions.\\

Finally, we treat the case where the local functions are not required to be strongly convex. In this setting, for proving a lower bound, we can use the same construction as in \eqref{def:non_smooth_locals},
where we are free to choose any $\lambda$. In particular, let us choose $\lambda = \frac{1}{2(T+2)}$, and apply the lower bound derived above (note that 
in this case the condition $T\geq \frac{1}{2\lambda}-2$ trivially holds). 
Plugging in it into (\ref{def:nonsmooth_final_lb}), we establish 
that for any number of communication rounds $T$, the suboptimality is at least
\begin{align*} \label{def:nonsmooth_final_lb}
\frac{1}{8(T+2)}.
\end{align*}
Considering how large $T$ must be to make this smaller than some $\epsilon$, we 
get that $T$ must be at least $\frac{1}{8\epsilon}-2$. 

\subsection{Proof of \thmref{thm:comm_lb}}

As usual, we construct two functions $F_1,F_2$, and provide $F_1$ to
$m/2$ of the machines, and $F_2$ to the other $m/2$ machines, in some arbitrary order, such that the machine designated to provide the output receives $F_2$. Note that the average function $F$ is simply
$\frac{1}{2}(F_1(\bw)+F_2(\bw))$.

Let $c$ be a certain positive numerical constant (whose value corresponds to
$c$ in \lemref{lem:tao} below). Given some symmetric $M\in \{-1,+1\}^{d\times d}$,
where $\norm{M}\leq c\sqrt{d}$, and $j\in \{\lceil d/2 \rceil,\ldots,d\}$, define
  \[
  F_1(\bw) = 3\lambda\bw^\top\left(\left(I+\frac{1}{2c\sqrt{d}}M\right)^{-1}-\frac{1}{2}I\right)\bw
  \]
  \[
  F_2(\bw) = \frac{3\lambda}{2}\norm{\bw}^2-\delta \be_j,
  \]

The average $F$ of $F_1,F_2$ equals
  \[
  F(\bw) ~=~\frac{1}{2}\left(F_1(\bw)+F_2(\bw)\right) ~=~ \frac{3\lambda}{2}\bw^\top\left(I+\frac{1}{2c\sqrt{d}}M\right)^{-1}\bw-\frac{\delta}{2}\be_j,
  \]

with an optimum at
  \[
  \bw^* = \frac{\delta}{6\lambda}\left(I+\frac{1}{2c\sqrt{d}}M\right)\be_j.
  \]

The following lemma establishes that the functions satisfy the strong
convexity, smoothness and relatedness requirements of the theorem. The proof
also establishes that the inverse in the definition of $F_1$ indeed exists.

\begin{lemma}\label{lem:valid}
$F_1$ and $F_2$ are $\lambda$ strongly-convex, $9\lambda$ smooth, and
$\delta$-related.
\end{lemma}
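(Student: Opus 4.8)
The plan is to reduce every claim to a spectral statement about the single symmetric matrix $N\eqdef\frac{1}{2c\sqrt{d}}M$ and then track how the relevant eigenvalue maps behave. The one fact I would use throughout is that $\norm{N}=\frac{1}{2c\sqrt{d}}\norm{M}\le\frac{1}{2}$, so every eigenvalue $\mu$ of $N$ lies in $[-\frac{1}{2},\frac{1}{2}]$ (this holds regardless of the value of $c$, which is convenient). Consequently the eigenvalues of $I+N$ lie in $[\frac{1}{2},\frac{3}{2}]$; in particular they are bounded away from $0$, which immediately establishes that $(I+N)^{-1}$ exists, as asserted in the remark preceding the lemma. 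All eigenvalue bounds below are legitimate because $N$ is symmetric, so every matrix appearing in $F_1$ is a function of $N$ sharing its eigenvectors, and it suffices to evaluate the corresponding scalar maps on $[-\frac{1}{2},\frac{1}{2}]$.

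For the strong convexity and smoothness of $F_1$, I would compute its Hessian. Writing $F_1(\bw)=\bw^\top A_1\bw$ with the symmetric matrix $A_1=3\lambda\big((I+N)^{-1}-\frac{1}{2}I\big)$, we get $\nabla^2F_1=2A_1=6\lambda\big((I+N)^{-1}-\frac{1}{2}I\big)$, so it suffices to study the scalar map $\mu\mapsto 6\lambda\big(\frac{1}{1+\mu}-\frac{1}{2}\big)$ over $\mu\in[-\frac{1}{2},\frac{1}{2}]$. This map is decreasing, attaining its maximum $9\lambda$ at $\mu=-\frac{1}{2}$ and its minimum $\lambda$ at $\mu=\frac{1}{2}$, so $\lambda I\preceq\nabla^2F_1\preceq 9\lambda I$, giving $\lambda$-strong convexity and $9\lambda$-smoothness. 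For $F_2$ the Hessian is simply $3\lambda I$, whose eigenvalues all equal $3\lambda\in[\lambda,9\lambda]$, so $F_2$ is trivially $\lambda$-strongly convex and $9\lambda$-smooth as well.

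For $\delta$-relatedness I would read off the quadratic, linear, and constant parts in the sense of Definition~\ref{def:related}: the quadratic coefficients are $A_1$ and $A_2=\frac{3\lambda}{2}I$, the linear coefficients are $\mathbf{0}$ and $-\delta\be_j$, and both constants vanish. The linear part contributes $\norm{\mathbf{0}-(-\delta\be_j)}=\delta$ and the constant part contributes $0$, so the only condition requiring care is the spectral norm of $A_1-A_2$. Here I would simplify algebraically using $(I+N)^{-1}-I=-(I+N)^{-1}N$, obtaining $A_1-A_2=3\lambda\big((I+N)^{-1}-I\big)=-3\lambda(I+N)^{-1}N$. Since $N$ is symmetric, $(I+N)^{-1}N$ is symmetric with eigenvalues $\frac{\mu}{1+\mu}$, which over $[-\frac{1}{2},\frac{1}{2}]$ range in $[-1,\frac{1}{3}]$; hence $\norm{(I+N)^{-1}N}\le 1$ and $\norm{A_1-A_2}\le 3\lambda\le\delta$ by the hypothesis $\delta\ge 3\lambda$. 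All three conditions of Definition~\ref{def:related} then hold.

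The computations are all elementary once phrased spectrally; the only mild subtlety, and the step I would be most careful about, is the bound $\norm{(I+N)^{-1}N}\le 1$. The map $\mu\mapsto\frac{\mu}{1+\mu}$ is asymmetric on $[-\frac{1}{2},\frac{1}{2}]$, reaching $-1$ at the near-singular left endpoint $\mu\to-\frac{1}{2}$, so one must check that it is exactly this endpoint that saturates the bound and that its absolute value does not exceed $1$ there; this is precisely what couples the relatedness constant to the requirement $\delta\ge 3\lambda$ in the theorem's hypotheses.
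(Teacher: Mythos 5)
Your proof is correct and takes essentially the same route as the paper's: both arguments reduce every claim to the fact that the eigenvalues of $\frac{1}{2c\sqrt{d}}M$ lie in $[-\frac{1}{2},\frac{1}{2}]$ and then evaluate the induced scalar maps at the endpoints to get the Hessian range $[\lambda,9\lambda]$ and the bound $\norm{A_1-A_2}\le 3\lambda\le\delta$. The only differences are cosmetic — you rewrite $(I+N)^{-1}-I$ as $-(I+N)^{-1}N$ before bounding its spectrum (the paper bounds $\frac{1}{1+\mu}-1$ directly, getting the same range $[-\frac{1}{3},1]$ up to sign), and you spell out the linear- and constant-term checks that the paper dismisses as trivial.
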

\begin{proof}
The Hessian of $F_2$ is $3\lambda I$, which implies that $F_2$ is $3\lambda$
smooth and strongly convex (and in particular, $\lambda$-strongly convex). As
to $F_1$, note that since $\norm{M}\leq c\sqrt{d}$, then
\[
\norm{\frac{1}{2c\sqrt{d}}M}\leq \frac{1}{2},
\]
The fact that the spectral radius and spectral norm of symmetric matrices coincide implies that the eigenvalues of the matrix $I+\frac{1}{2c\sqrt{d}}M$ lie between $1-\frac{1}{2}=\frac{1}{2}$ and
$1+\frac{1}{2}=\frac{3}{2}$. Thus, all the eigenvalues are strictly
positive, hence the matrix is indeed invertible as in the definition of
$F_1$. Moreover, the eigenvalues of the inverse lie in
$\left[\frac{1}{3/2},\frac{1}{1/2}\right]=\left[\frac{2}{3},2\right]$, and
therefore those of
$3\lambda\left(\left(I+\frac{1}{2c\sqrt{d}}M\right)^{-1}-\frac{1}{2}I\right)$
lie in
$\left[3\lambda\left(\frac{2}{3}-\frac{1}{2}\right),3\lambda\left(2-\frac{1}{2}\right)\right]=\left[\frac{\lambda}{2},\frac{9\lambda}{2}\right]$.
Thus, the spectrum of the Hessian of $F_1$ lie in $\left[\lambda,9\lambda\right]$, which implies that $F_1$ is $\lambda$-strongly convex and $9\lambda$ smooth.

To show $\delta$-relatedness, the only non-trivial part is upper-bounding the norm of the
difference of the quadratic terms, which equals the following:
\begin{align}
  &\norm{3\lambda\left(\left(I+\frac{1}{2c\sqrt{d}}M\right)^{-1}-\frac{1}{2}I\right)
  -
  \frac{3\lambda}{2}I}\notag\\
  &=3\lambda\norm{\left(I+\frac{1}{2c\sqrt{d}}M\right)^{-1}-I}.\label{eq:valid1}
  \end{align}
Since $\norm{M}\leq c\sqrt{d}$, the eigenvalues of
$\left(I+\frac{1}{2c\sqrt{d}}M\right)^{-1}-I$ lie between
$\frac{1}{1+1/2}-1=-\frac{1}{3}$ and $\frac{1}{1-1/2}-1=1$, which implies that
\eqref{eq:valid1} can be upper bounded by $3\lambda\leq\delta$.
\end{proof}

The next lemma proves the second part of the theorem, namely an upper bound on the suboptimality of any local function optimizer.

\begin{lemma}\label{lem:localopt}
  For any $\hat{\bw}_j=\arg\min_{\bw\in\reals^d}F_j(\bw)$, it holds that
  $F(\hat{\bw}_j)-\min_{\bw\in\reals^d}F(\bw)\leq c\delta^2/\lambda$ for some numerical positive constant $c$.
\end{lemma}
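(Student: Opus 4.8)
The plan is to identify the two possible local optimizers $\hat{\bw}_1,\hat{\bw}_2$ in closed form, observe that each lies within distance $O(\delta/\lambda)$ of the global optimum $\bw^*$, and then invoke the smoothness of $F$ to convert this distance bound into a suboptimality bound of the required order $\delta^2/\lambda$.

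First I would compute the local minimizers explicitly. Writing $B := I+\frac{1}{2c\sqrt{d}}M$, the function $F_1(\bw)=3\lambda\,\bw^\top\left(B^{-1}-\frac{1}{2}I\right)\bw$ is a pure quadratic with \emph{no} linear term, and since \lemref{lem:valid} shows its Hessian is positive definite (eigenvalues in $[\lambda,9\lambda]$), its unique minimizer is $\hat{\bw}_1=\mathbf{0}$. The function $F_2(\bw)=\frac{3\lambda}{2}\norm{\bw}^2-\delta\,\be_j^\top\bw$ has gradient $3\lambda\bw-\delta\be_j$, so $\hat{\bw}_2=\frac{\delta}{3\lambda}\be_j$.

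Next I would record the smoothness of $F$. Its Hessian is $3\lambda B^{-1}$, whose eigenvalues lie in $[2\lambda,6\lambda]$ because those of $B^{-1}$ lie in $\left[\frac{2}{3},2\right]$ (as established in the proof of \lemref{lem:valid}); hence $F$ is $6\lambda$-smooth, and the smoothness inequality from \secref{sec:preliminaries} gives, for either local optimizer, $F(\hat{\bw}_j)-F(\bw^*)\le 3\lambda\norm{\hat{\bw}_j-\bw^*}^2$. It then remains to bound the two distances using $\bw^*=\frac{\delta}{6\lambda}B\be_j$. For $j=1$ we have $\norm{\hat{\bw}_1-\bw^*}=\norm{\bw^*}=\frac{\delta}{6\lambda}\norm{B\be_j}\le\frac{\delta}{6\lambda}\cdot\frac{3}{2}$, since $\norm{B}\le\frac{3}{2}$. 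For $j=2$ we have $\hat{\bw}_2-\bw^*=\frac{\delta}{6\lambda}(2I-B)\be_j$, and $2I-B=I-\frac{1}{2c\sqrt{d}}M$ again has spectral norm at most $\frac{3}{2}$. Both cases yield $\norm{\hat{\bw}_j-\bw^*}^2\le\frac{\delta^2}{16\lambda^2}$, so that $F(\hat{\bw}_j)-F(\bw^*)\le\frac{3\delta^2}{16\lambda}$, which is the claimed bound (with the numerical constant equal to $3/16$).

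There is no substantive obstacle here: once the local optimizers are in hand the argument is a routine calculation. The only points requiring care are recognizing that $F_1$ carries no linear term, so $\hat{\bw}_1$ is the origin rather than some nontrivial point, and using the spectral bounds on $B$, $B^{-1}$, and $2I-B$ from \lemref{lem:valid}, which are precisely what control the two distances \emph{uniformly} in the (algorithm-unknown) index $j$.
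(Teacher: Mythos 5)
Your proof is correct, but it takes a different route from the paper's. The paper never computes $\hat{\bw}_1$ and $\hat{\bw}_2$ in closed form: it writes each local minimizer and the global minimizer abstractly via the quadratic-minimizer formula $\pm\frac{1}{2}A^{-1}\mathbf{b}$, decomposes $A_j^{-1}\mathbf{b}_j-A^{-1}\mathbf{b}$ by the triangle inequality into $(A_j^{-1}-A^{-1})\mathbf{b}_j$ and $A^{-1}(\mathbf{b}_j-\mathbf{b})$, and then bounds the pieces using only the $\delta$-relatedness of the linear terms and the eigenvalue range $[\lambda/2,9\lambda/2]$ of the quadratic forms, concluding $\norm{\hat{\bw}_j-\bw^*}\le\frac{3\delta}{2\lambda}$ and hence a suboptimality of $\frac{81\delta^2}{8\lambda}$ via $9\lambda$-smoothness. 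You instead exploit the specific structure of the construction: $F_1$ has no linear term so $\hat{\bw}_1=\mathbf{0}$, $\hat{\bw}_2=\frac{\delta}{3\lambda}\be_j$, and the displacements from $\bw^*=\frac{\delta}{6\lambda}B\be_j$ reduce to $\frac{\delta}{6\lambda}B\be_j$ and $\frac{\delta}{6\lambda}(2I-B)\be_j$, both controlled by the spectral bound $\norm{B},\norm{2I-B}\le\frac{3}{2}$. Your calculation is sound (including the sharper $6\lambda$-smoothness of $F$ itself) and yields a better constant, $3/16$ versus $81/8$. What the paper's argument buys in exchange is generality: it would apply verbatim to any pair of $\delta$-related, $\lambda$-strongly convex, $9\lambda$-smooth quadratics, whereas yours is tied to this particular $F_1,F_2$. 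Since the lemma only concerns this construction, either proof suffices.
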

\begin{proof}
  The optimum of any quadratic and strongly-convex function $\bw^\top A \bw+\mathbf{b}^\top \bw+c$ equals $\frac{1}{2}A^{-1}\mathbf{b}$.  Therefore, if $\bw^*$ is the optimizer of $F$, and we denote the parameters of $F$ and $F_j$ by $A,\mathbf{b},c$ and $A_j,\mathbf{b}_j,c_j$ respectively, then
  \begin{align*}
    \norm{\hat{\bw}_j-\bw^*} &= \frac{1}{2}\norm{A_j^{-1}\mathbf{b}_j-A^{-1}\mathbf{b}}\\
    &= \frac{1}{2}\norm{A_j^{-1}\mathbf{b}_j-A^{-1}\mathbf{b}_j+A^{-1}\mathbf{b}_j-A^{-1}\mathbf{b}}\\
    &\leq
    \frac{1}{2}\left(\norm{\left(A_j^{-1}-A^{-1}\right)\mathbf{b}_j}+\norm{A^{-1}\left(\mathbf{b}_j-\mathbf{b}\right)}\right)\\
    &\leq
    \frac{1}{2}\left(\norm{A_j^{-1}-A^{-1}}\norm{\mathbf{b}_j}+\norm{A^{-1}}\norm{\mathbf{b}_j-\mathbf{b}}\right).
  \end{align*}
  By definition of $F_1,F_2$ and the average function $F$, this is at most      \begin{equation}\label{eq:vvv1}
  \frac{1}{2}\left(\norm{A_j^{-1}-A^{-1}}\delta+\norm{A^{-1}}\frac{\delta}{2}\right).
  \end{equation}
  In \lemref{lem:valid}, we showed that $F_1,F_2$ are $\lambda$-strongly convex and $9\lambda$ smooth, which implies that the eigenvalues of $A_j$ as well as $A$ lie in $\left[\frac{\lambda}{2},\frac{9\lambda}{2}\right]$. Therefore, the eigenvalues of $A_j^{-1}$ and $A^{-1}$ lie in $\left[\frac{2}{9\lambda},\frac{2}{\lambda}\right]$, so $\norm{A^{-1}}\leq \frac{2}{\lambda}$ and $\norm{A_j^{-1}-A^{-1}}\leq \frac{2}{\lambda}$.
	Substituting this back into \eqref{eq:vvv1}, we get
  \[
  \norm{\hat{\bw}_j-\bw^*} \leq \frac{1}{\lambda}\left(\delta+\frac{\delta}{2}\right) = \frac{3\delta}{2\lambda}.
  \]
  Finally, since $F$ is $9\lambda$-smooth, and its minimizer is $\bw^*$,
  \[
  F(\hat{\bw}_j)-F(\bw^*) \leq \frac{9\lambda}{2}\norm{\hat{\bw_j}-\bw^*}^2 \leq \frac{9\lambda}{2}\left(\frac{3\delta}{2\lambda}\right)^2,
  \]
  which equals $81\delta^2/8\lambda$ as required.
\end{proof}

We now turn to derive the lower bound in the theorem statement. As discussed earlier, the intuition is that the optimal point $\bw^*$ is a function of the $j$-th column of $M$, so the machines holding $F_1$ must broadcast enough information on $M$ to the designated machine producing the algorithm's output (the machine, by construction, holds $F_2$, and hence knows $j$ but not $M$). As long as the communication budget is smaller than the size of $M$, this will be difficult to achieve. This intuition is formalized in the following lemma, which is based on information-theoretic tools:

\begin{lemma}\label{lem:comm}
  For any dimension $d\geq c$ (where $c$ is the same constant as in \lemref{lem:tao} and the definition of $F_1$), and for any (possibly randomized) 1-round algorithm using at most
  $d^2/128$ bits of communication, there exists a valid choice of $M,j$ for the functions $F_1,F_2$ defined above,
  such that the vector $\hat{\bw}$ returned by the algorithm satisfies
  \[
  \E\left[\norm{\hat{\bw}-\bw^*}^2\right] \geq c'\left(\frac{\delta}{\lambda}\right)^2,
  \]
  where the expectation is over the algorithm's randomness, and $c'$ is a positive
  numerical constant.
\end{lemma}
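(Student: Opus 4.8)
The plan is to reduce the optimization problem to a one-way estimation problem, in which the designated machine (which holds $F_2$, hence knows $j$ but not $M$) must reconstruct the $j$-th column of $M$ from a single short message sent by the machine holding $F_1$ (which knows $M$ but not $j$). Writing $\bw^*=\frac{\delta}{6\lambda}\big(\be_j+\frac{1}{2c\sqrt d}M\be_j\big)$, note that for every coordinate $i\le\lfloor d/2\rfloor$ we have $(\be_j)_i=0$ (since $j\ge\lceil d/2\rceil$), so $(\bw^*)_i=\beta\,M_{ij}$ with $\beta:=\frac{\delta}{12c\lambda\sqrt d}$. Hence whenever $\sgn(\hat w_i)\ne M_{ij}$ we have $\absval{\hat w_i-\beta M_{ij}}\ge\beta$, and therefore
\[
\norm{\hat\bw-\bw^*}^2\ \ge\ \beta^2\cdot\big|\{\,i\le\lfloor d/2\rfloor:\ \sgn(\hat w_i)\ne M_{ij}\,\}\big|.
\]
So it suffices to show that, for a suitable random instance, the sign vector $\hat s_j:=(\sgn\hat w_i)_{i\le\lfloor d/2\rfloor}$ must disagree with the block column $C_j:=(M_{ij})_{i\le\lfloor d/2\rfloor}$ on a constant fraction of its $\approx d/2$ coordinates in expectation.

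For the hard distribution I would draw $M$ uniformly among symmetric $\{-1,+1\}$ matrices conditioned on $\mathcal E=\{\norm{M}\le c\sqrt d\}$, and draw $j$ uniformly from $\{\lceil d/2\rceil,\dots,d\}$ independently; by \lemref{lem:tao} the constant $c$ can be chosen so that $\Pr[\mathcal E^c]$ is exponentially small in $d$, which makes $\mathcal E$ a high-probability event and keeps the construction of $F_1,F_2$ valid. The relevant source is the off-diagonal block $X:=(M_{ij})_{i\le\lfloor d/2\rfloor,\,j\ge\lceil d/2\rceil}$ of $N\approx d^2/4$ entries; these entries are i.i.d. uniform before conditioning, and a short computation (using $\Pr[\mathcal E^c]\cdot N\to 0$) shows that conditioning on $\mathcal E$ costs at most a constant, i.e. $H(X\mid\mathcal E)\ge N-O(1)$. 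The columns $C_j$ are exactly the columns of this block, so recovering $C_j$ for a random $j$ is recovering a random column of $X$.

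The core is a rate-distortion converse. Let $\Pi$ denote the single message, a function of $M$ (and private randomness) of at most $d^2/128$ bits, so $H(\Pi)\le d^2/128$. Since the reconstruction of $C_j$ depends only on $(\Pi,j)$, I would assemble a single global reconstruction $\hat X$ of the whole block by applying the decoder to every possible column index; $\hat X$ is then a function of $\Pi$ (and the algorithm's randomness) alone, so the data-processing inequality gives $I(X;\hat X)\le I(X;\Pi)\le H(\Pi)\le d^2/128$. On the other hand, the standard converse $I(X;\hat X)\ge H(X\mid\mathcal E)-\sum_i h(D_i)\ge H(X\mid\mathcal E)-N\,h(\bar D)$ — using $H(X_i\mid\hat X_i)\le h(\Pr[\hat X_i\ne X_i])$ and concavity of the binary entropy $h$ — forces the average per-coordinate error $\bar D$ to satisfy $h(\bar D)\ge 1-\frac{d^2/128+O(1)}{N}\to 1-\tfrac1{32}$, hence $\bar D\ge D_0$ for an absolute constant $D_0>0$. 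Averaging the block error over the random column index converts $\bar D$ into an expected per-column Hamming error of at least $D_0\lfloor d/2\rfloor$, and substituting into the reduction above gives $\E\big[\norm{\hat\bw-\bw^*}^2\big]\ge\beta^2 D_0\lfloor d/2\rfloor=\Omega\!\big((\delta/\lambda)^2\big)$; since this is an average over valid $(M,j)$, some valid pair attains it.

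The main obstacle is this information-theoretic step, and specifically the interaction between the communication constraint and the spectral-norm conditioning: the converse is cleanest for a uniform source, so I must verify that conditioning on $\mathcal E$ perturbs the source entropy only by $O(1)$ (which needs $\Pr[\mathcal E^c]\cdot N\to 0$, hence the exponential tail from \lemref{lem:tao}), and that the decoder truly depends on $M$ only through the bounded-length message $\Pi$. The remaining bookkeeping — tracking $\beta$, $N\approx d^2/4$, and the ratio $\frac{d^2/128}{N}=\tfrac1{32}$ so that $\bar D$ stays bounded away from $0$ — is routine, as is the final step of turning the in-expectation bound over the hard distribution into the existence of a single worst-case $(M,j)$.
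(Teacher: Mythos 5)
Your proposal is correct, and while its overall architecture (reduce recovery of $\bw^*$ to recovery of the $j$-th column of $M$, draw $M,j$ at random, bound the recoverable information by the message length, finish by an averaging argument) matches the paper's, the central information-theoretic step is genuinely different. The paper never assembles a global reconstruction: it lower-bounds, entry by entry, the sign-disagreement probability $\Pr\left(M_{j,i}\hat{M}_{j,i}\leq 0\right)$ by one half of one minus the total variation distance between $\Pr(S\mid M_{j,i}<0)$ and $\Pr(S\mid M_{j,i}>0)$, converts this via Pinsker's inequality and Jensen to an average of $\sqrt{I(S;M_{j,i})}$, and then invokes a direct-sum lemma (Lemma 6 of \cite{shamir2013fundamental}) stating that a $B$-bit message carries at most $B$ bits of total mutual information about independent source bits. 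You instead run the decoder on every candidate column index to build a full reconstruction $\hat{X}$ of the off-diagonal block as a function of the message alone, and then apply a standard Fano/rate--distortion converse $I(X;\hat{X})\geq H(X)-N h(\bar{D})$ (with $h$ the binary entropy) against $I(X;\hat{X})\leq H(\Pi)\leq d^2/128$; the ``simulate the decoder on all $j$'' step is what makes the global converse applicable and is the main departure. What each route buys: yours is self-contained and textbook, avoiding the external mutual-information lemma, but it obliges you to verify that conditioning on $\norm{M}\leq c\sqrt{d}$ costs only $O(1)$ entropy (which your $\Pr[\mathcal{E}^c]\cdot N\to 0$ computation does handle); the paper instead works with the unconditioned uniform $M$ and discounts the bad event at the end, using the boundedness supplied by its clipping operation $[\cdot]$. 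The remaining ingredients --- the identity $(\bw^*)_i=\frac{\delta}{12c\lambda\sqrt{d}}M_{ij}$ for $i\leq\lfloor d/2\rfloor$, the lower bound of $\beta^2$ times the Hamming error, the constant $1/32$ arising from $\frac{d^2/128}{d^2/4}$, and the final passage from an average over the hard distribution to a single valid $(M,j)$ (Yao's principle in the paper) --- coincide in substance with the paper's proof.
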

Using the lemma and the $\lambda$-strong convexity of $F_1,F_2$ (and hence
their average $F$),
\[
\E[F(\hat{\bw})-F(\bw^*)] ~\geq~ \frac{\lambda}{2}~\E[\norm{\bw-\bw^*}^2] ~\geq~ \frac{c'}{2}\frac{\delta^2}{\lambda},
\]
hence proving the theorem.

It now remains to prove \lemref{lem:comm}:
\begin{proof}[Proof of \lemref{lem:comm}]
  By definition of $\bw^*$, we have that the $j$-th column of $M$, designated as $M_j$, satisfies
  \[
  M_j = 2c\sqrt{d}\left(\frac{6\lambda}{\delta}\bw^*-\be_j\right).
  \]
  Given the predictor $\hat{\bw}$ returned by the algorithm, define
  \[
  \hat{M}_j = 2c\sqrt{d}\left(\frac{6\lambda}{\delta}\hat{\bw}-\be_j\right).
  \]
  This can be thought of as the algorithm's `estimate' of the $j$-th column of $M$,
  based on the returned predictor.

  Define $[w]=\min\{1,\max\{-1,w\}\}$ as the clipping operation of a scalar $w$
  to $[-1,+1]$, and for a vector $\bw=(w_1,\ldots,w_d)$, define
  $[\bw]=([w_1],[w_2],\ldots,[w_d])$. By the expressions for $M_j,\hat{M}_j$ above, we have
  \begin{align*}
  \norm{[\hat{M_j}-M_j]}^2 &=~ \left\|\left[2c\sqrt{d}\frac{6\lambda}{\delta}\left(\hat{\bw}-\bw^*\right)\right]\right\|^2
  ~=~ \sum_{i=1}^{d}\left[\frac{12c\lambda\sqrt{d}}{\delta}\left(\hat{w}_i-w^*_i\right)\right]^2\\
  &\leq \left(\frac{12c\lambda\sqrt{d}}{\delta}\right)^2\sum_{i=1}^{d}(\hat{w}_i-w^*_i)^2,
  \end{align*}
  which implies that
  \begin{equation}\label{eq:commlem0}
  \norm{\hat{\bw}-\bw^*}^2 \geq \left(\frac{\delta}{12c\lambda\sqrt{d}}\right)^2\norm{[\hat{M_j}-M_j]}^2,
  \end{equation}
  To get the lemma statement, it
  is enough to show that for some $M,j$, one can lower bound $\E\left[\norm{[\hat{M_j}-M_j]}^2\right]$ (where the expectation is over the algorithm's randomness) by some constant multiple of $d$.

  Below, we will prove that if $M$ (in the definition of $F_1$) is chosen uniformly at random from all $\{-1,+1\}$-valued $d\times d$ symmetric matrices, and $j$ (in the definition of $F_2$) is chosen uniformly at random from $\{\lceil d/2 \rceil,\ldots,d\}$, then for any deterministic algorithm,
  \begin{equation}\label{eq:commlem1}
  \E_{M,j}\left[\norm{[\hat{M_j}-M_j]}^2\right]\geq \frac{d}{8}
  \end{equation}
  Let us first show how this can be used to prove the lemma. To do so, we
  will need the following lemma on the concentration of the spectral norm of
  random symmetric matrices.
  \begin{lemma}[\cite{terence2012topics}, Corollary 2.3.6]\label{lem:tao}
    There exist positive numerical constants $c,c'$, such that if $M$ is a
    $d\times d$ symmetric matrix, where each entry $M_{j,i},j\geq i$ is chosen independently and uniformly from
    $\{-1,+1\}$, and $d\geq c$, then $\Pr(\norm{M}> c\sqrt{d})\leq
    c\exp(-c' d)$.
  \end{lemma}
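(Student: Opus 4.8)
The plan is to prove this standard concentration estimate for the operator norm of a symmetric $\{-1,+1\}$ (Wigner-type) matrix directly, via an $\epsilon$-net discretization of the sphere combined with a scalar tail bound for the associated quadratic forms. Note that the requested bound is far weaker than the sharp edge scaling $\norm{M}\approx 2\sqrt d$, so there is ample slack in the constants and no sharp combinatorics is needed.

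First I would reduce the spectral norm to a supremum of quadratic forms: since $M$ is symmetric, $\norm{M}=\sup_{\norm{\bx}=1}\absval{\bx^\top M\bx}$. Fix $\epsilon=1/4$ and let $\mathcal N$ be an $\epsilon$-net of the unit sphere in $\reals^d$; a standard volume argument gives $\absval{\mathcal N}\le (1+2/\epsilon)^d=9^d$. The net inequality for quadratic forms of symmetric matrices (obtained by approximating the maximizing unit vector $\by$ by a net point $\bx$ and bounding $\absval{\by^\top M\by-\bx^\top M\bx}\le 2\epsilon\norm{M}$) then yields $\norm{M}\le (1-2\epsilon)^{-1}\max_{\bx\in\mathcal N}\absval{\bx^\top M\bx}=2\max_{\bx\in\mathcal N}\absval{\bx^\top M\bx}$. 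Hence it suffices to control the maximum of $\absval{\bx^\top M\bx}$ over the \emph{finite} set $\mathcal N$.

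Second, I would bound the tail for a single fixed $\bx$. Writing $\bx^\top M\bx=\sum_{i\le j}a_{ij}M_{ij}$ with $a_{ii}=x_i^2$ and $a_{ij}=2x_ix_j$ for $i<j$, this is a linear combination of the \emph{independent} $\{-1,+1\}$ variables $\{M_{ij}\}_{i\le j}$, with coefficient energy $\sum_{i\le j}a_{ij}^2=\sum_i x_i^4+4\sum_{i<j}x_i^2x_j^2=2-\sum_i x_i^4\le 2$, uniformly over unit $\bx$. Hoeffding's inequality for bounded independent zero-mean variables then gives $\Pr\!\left(\absval{\bx^\top M\bx}>t\right)\le 2\exp(-t^2/4)$.

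Finally, a union bound over $\mathcal N$ combines the two estimates: using $\norm{M}\le 2\max_{\bx\in\mathcal N}\absval{\bx^\top M\bx}$,
\[
\Pr\!\left(\norm{M}>c\sqrt d\right)\le \absval{\mathcal N}\cdot 2\exp\!\left(-\tfrac{(c\sqrt d/2)^2}{4}\right)= 2\exp\!\left(d\ln 9-\tfrac{c^2 d}{16}\right).
\]
Choosing any $c$ with $c^2>16\ln 9$ (e.g.\ $c\ge 6$) makes the exponent negative and yields the claim with $c'=c^2/16-\ln 9>0$, for all $d$ exceeding a suitable numerical constant. There is no genuinely hard step here; the only points requiring care are the net inequality for quadratic forms (which costs only the factor $(1-2\epsilon)^{-1}$) and the verification that the coefficient energy of the quadratic form is an absolute constant over the whole sphere — the symmetric structure, which makes the independent randomness indexed by pairs $i\le j$ with each off-diagonal entry appearing twice, is exactly what the computation of $\sum_{i\le j}a_{ij}^2$ absorbs. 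For completeness I would remark that the same conclusion also follows from the moment (trace) method, bounding $\E[\operatorname{tr}(M^{2k})]$ via the F\"uredi--Koml\'os walk count and taking $k=\Theta(d)$, but the net argument above is shorter and entirely sufficient for our purposes.
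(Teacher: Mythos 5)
Your proof is correct, but note that the paper itself never proves this statement: Lemma~\ref{lem:tao} is quoted as a black box from Tao's \emph{Topics in Random Matrix Theory} (Corollary 2.3.6), so there is no in-paper argument to compare against. What you have done is supply a self-contained derivation of the cited fact, and it holds up: the reduction $\norm{M}=\sup_{\norm{\bx}=1}\absval{\bx^\top M\bx}$ is valid for symmetric matrices, the $(1+2/\epsilon)^d$ net-cardinality bound and the factor $(1-2\epsilon)^{-1}=2$ are standard, the coefficient-energy computation $\sum_i x_i^4 + 4\sum_{i<j}x_i^2x_j^2 = 2-\sum_i x_i^4\le 2$ correctly absorbs the doubling of off-diagonal entries forced by symmetry, and Hoeffding plus the union bound with $c\ge 6$ (so that $c^2/16>\ln 9$) closes the argument, with the prefactor $2\le c$ matching the form of the stated bound. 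This $\epsilon$-net-plus-Hoeffding route is in fact the same elementary technique developed in the cited section of Tao's book (which then refines it via the moment method to get sharp constants near $2\sqrt d$), so your argument is exactly the right level of generality for the weak, constant-agnostic bound the paper needs; the one thing to keep straight when splicing it in is that your constant $c$ must be the same one used in the definition of $F_1$ and in Lemma~\ref{lem:comm}, where $\norm{M}\le c\sqrt d$ is assumed.
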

  First, we note that the expectation in \eqref{eq:commlem1} is over all symmetric $\{-1,+1\}$-valued matrices, including those
  whose spectral norm may be larger than $c\sqrt{d}$. However, by \lemref{lem:tao},
  $\Pr(\norm{M}>c\sqrt{d})\leq c\exp(-c' d)$ for some absolute constant
  $c'$. Letting $E$ be the event that $\norm{M}>c\sqrt{d}$, and noting that $\norm{[\bw]}^2\leq d$
  for any vector $\bw$, we have
  \begin{align*}
    \E\left[\norm{[\hat{M_j}-M_j]}^2\right] &=
    \E\left[\norm{[\hat{M_j}-M_j]}^2\middle| E\right]\Pr(E)+\E\left[\norm{[\hat{M_j}-M_j]}^2\middle|\neg E\right]\Pr(\neg E)\\
    &\leq d\Pr(E)+\E\left[\norm{[\hat{M_j}-M_j]}^2\middle|\neg E\right]\\
    &\leq cd\exp(-c' d)+\E\left[\norm{[\hat{M_j}-M_j]}^2\middle|\neg E\right].
  \end{align*}
  Plugging back into \eqref{eq:commlem1}, we get that
  \[
  \E\left[\norm{[\hat{M_j}-M_j]}^2\middle|\neg E\right] \geq \frac{d}{8}-cd\exp(-c_1 d),
  \]
  which is at least $d/16$ for any $d$ larger than some constant. Combining with \eqref{eq:commlem0}, we get
  \[
  \E\left[\norm{\hat{\bw}-\bw^*}^2\middle| \neg E \right] \geq \frac{1}{16}\left(\frac{\delta}{12c\lambda}\right)^2.
  \]
  This inequality implies that for any deterministic algorithm, in expectation over the random draw
  of $j$ and a $\{-1,+1\}$-valued matrix $M$ with spectral norm at most
  $c\sqrt{d}$, $\norm{\hat{\bw}-\bw^*}^2$ will be at least $c'\left(\frac{\delta}{\lambda}\right)^2$
  for some suitable constant $c'$. By Yao's minimax principle,
  this implies that for any (possibly randomized) algorithm, there will be some deterministic choice of $M,j$ such that $\norm{M}\leq c\sqrt{d}$, and
  for which
  \[
  \E\left[\norm{\hat{\bw}-\bw^*}^2\right]\geq
  c'\left(\frac{\delta}{\lambda}\right)^2
  \]
  (in expectation over the algorithm's randomness), yielding the lemma's statement.

  It now remains to prove \eqref{eq:commlem1}, assuming $j$ is chosen uniformly at random from $\{\lceil d/2 \rceil,\ldots,d\}$,
  and $M$ is chosen at random (i.e. each entry at or above the main diagonal is chosen independently and uniformly
  from $\{-1,+1\}$). Roughly speaking, the proof idea is to reduce this to
  an upper bound on how much information the machines holding $M$ can send on $M$'s entries (and more particularly, on the entries in
  the upper-right quadrant of
  $M$). Since this quadrant is composed of
  $\Theta(d^2)$ random variables, and the machines can send much less than
  $d^2$ bits, this information is necessarily restricted.
  
  Let $\Pr(\cdot)$ denote probability with
  respect to the random choice of $M,j$, and let ${\Pr}_{j}(\cdot)$ denote probability
  conditioned on the choice of $j$. Recalling that any entry $M_{j,i}$ in the $j$-th column has values in $\{-1,+1\}$, it follows
  that either $M_{j,i}$ has the same sign as $\hat{M}_{j,i}$, or that $([M_{j,i}-\hat{M}_{j,i}])^2$ is at least $1$.
  Therefore, we have the following:
  \begin{align}
    \E&\left[\norm{[M_j-\hat{M}_j]}^2\right] = \sum_{i=1}^{d}\E[([M_{j,i}-\hat{M}_{j,i}])^2]~\geq~
    \sum_{i=1}^{\lceil d/2 \rceil}\E[([M_{j,i}-\hat{M}_{j,i}])^2]\notag\\
    &\geq \sum_{i=1}^{\lceil d/2 \rceil}\E\left[([M_{j,i}-\hat{M}_{j,i}])^2\middle| M_{j,i}\hat{M}_{j,i}\leq 0\right]\Pr\left(M_{j,i}\hat{M}_{j,i}\leq 0\right)+0\notag\\
    &\geq \sum_{i=1}^{\lceil d/2 \rceil}\Pr\left(M_{j,i}\hat{M}_{j,i}\leq 0\right)~=~ \sum_{i=1}^{\lceil d/2 \rceil}\left(\frac{1}{1+\lfloor d/2 \rfloor}\sum_{j=\lceil d/2 \rceil}^{d}{\Pr}_{j}\left(M_{j,i}\hat{M}_{j,i}\leq 0\right)\right)\notag\\
    &=\frac{1}{1+\lfloor d/2 \rfloor}\sum_{i=1}^{\lceil d/2 \rceil}\sum_{j=\lceil d/2 \rceil}^{d}\left(\frac{1}{2}{\Pr}_{j}\left(\hat{M}_{j,i}\leq 0 | M_{j,i}>0\right)+
    \frac{1}{2}{\Pr}_{j}\left(\hat{M}_{j,i}\geq 0 | M_{j,i}<0\right)\right)\notag\\
    &\geq
    \frac{1/2}{1+\lfloor d/2 \rfloor}\sum_{i=1}^{\lceil d/2 \rceil}\sum_{j=\lceil d/2 \rceil}^{d}
    \left(1-\left({\Pr}_{j}\left(\hat{M}_{j,i}\geq 0 | M_{j,i}>0\right)-
    {\Pr}_{j}\left(\hat{M}_{j,i}\geq 0 | M_{j,i}<0\right)\right)\right)\notag\\
    &\geq \frac{1/2}{1+\lfloor d/2 \rfloor}\sum_{i=1}^{\lceil d/2 \rceil}\sum_{j=\lceil d/2 \rceil}^{d}\left(1-\left|{\Pr}_{j}\left(\hat{M}_{j,i}\geq 0 | M_{j,i}<0\right)-
    {\Pr}_{j}\left(\hat{M}_{j,i}\geq 0 | M_{j,i}>0\right)\right|\right)\notag\\
    &= \frac{\lceil d/2\rceil}{2}-\frac{1/2}{1+\lfloor d/2\rfloor}\sum_{i=1}^{\lceil d/2 \rceil}\sum_{j=\lceil d/2 \rceil}^{d}\left|{\Pr}_{j}\left(\hat{M}_{j,i}\geq 0 | M_{j,i}<0\right)-
    {\Pr}_{j}\left(\hat{M}_{j,i}\geq 0 | M_{j,i}>0\right)\right|.\label{eq:commlem2}
  \end{align}
  Let $S$ be the vector of bits broadcasted by the machines holding $F_1$, and received by the machine designated with providing the output (recalling that it only holds $F_2$). Note that
  conditioned on $S$ and $j$, the algorithm's output (and hence $\hat{M}_{j,i}$) is
  independent of $M$. Therefore, we have
  \begin{align*}
    &\left|{\Pr}_{j}\left(\hat{M}_{j,i}\geq 0 | M_{j,i}<0\right)-
    {\Pr}_{j}\left(\hat{M}_{j,i}\geq 0 | M_{j,i}>0\right)\right|\\
    &=\left|\sum_{S}{\Pr}_{j}\left(\hat{M}_{j,i}\geq 0 | S,M_{j,i}<0\right)\Pr(S|M_{j,i}<0)-
    \sum_{S}{\Pr}_{j}\left(\hat{M}_{j,i}\geq 0 | S,M_{j,i}>0\right)\Pr(S|M_{j,i}>0)\right|\\
    &=\left|\sum_{S}{\Pr}_{j}\left(\hat{M}_{j,i}\geq 0 | S\right)\Pr(S|M_{j,i}<0)-
    \sum_{S}{\Pr}_{j}\left(\hat{M}_{j,i}\geq 0 | S\right)\Pr(S|M_{j,i}>0)\right|\\
    &\leq \sum_{S}\left|{\Pr}_{j}\left(\hat{M}_{j,i}\geq 0 | S\right)\left(\Pr(S|M_{j,i}<0)-
    \Pr(S|M_{j,i}>0)\right)\right|\\
    &\leq \sum_{S}\left|{\Pr}_{j}(S|M_{j,i}<0)-
    {\Pr}_{j}(S|M_{j,i}>0)\right|\\
    &\leq \sum_{S}\left|{\Pr}_{j}(S|M_{j,i}<0)-{\Pr}_{j}(S)\right|+
    \sum_{S}\left|{\Pr}_{j}(S|M_{j,i}>0)-{\Pr}_{j}(S)\right|.
  \end{align*}
  Since $S$ is sent by the machines holding $F_1$ (and not $F_2$), it is independent of $j$. Therefore, we can write the above as
  \[
  \sum_{S}\left|{\Pr}(S|M_{j,i}<0)-{\Pr}(S)\right|+
    \sum_{S}\left|{\Pr}(S|M_{j,i}>0)-{\Pr}(S)\right|
  \]
  where $j$ in the conditioning is a fixed index.
  Using Pinsker's inequality, we can upper bound the above by
  \[
  \sqrt{2 D_{kl}\left(p(S|M_{j,i}<0)||p(S)\right)}+
  \sqrt{2 D_{kl}\left(p(S|M_{j,i}>0)||p(S)\right)}
  \]
  where $p$ is the probability distribution of $S$,
  and $D_{kl}$ is the Kullback-Leibler divergence. By the elementary inequality $\sqrt{a}+\sqrt{b}\leq
  \sqrt{2(a+b)}$ for all non-negative $a,b$, we can upper bound the above by
  \begin{align*}
  &\sqrt{4\left(D_{kl}\left(p(S|M_{j,i}<0)||p(S)\right)+
  D_{kl}\left(p(S|M_{j,i}>0)||p(S)\right)\right)}\\
  &=\sqrt{8}\sqrt{\frac{1}{2}\left(D_{kl}\left(p(S|M_{j,i}<0)||p(S)\right)+
  D_{kl}\left(p(S|M_{j,i}>0)||p(S)\right)\right)}.
  \end{align*}
  Using the fact that $M_{j,i}$ (for some fixed $j,i$) is uniformly distributed in $\{-1,+1\}$, and
  that the mutual information $I(X;Y)$ between random variables $X,Y$ equals
  $\E_{Y}\left[D_{kl}(p(X|Y=y)||p(X))\right]$, the above equals
  \[
  \sqrt{8}\sqrt{I(S;M_{j,i})}.
  \]
  Recalling that this is an upper bound on $\left|{\Pr}_{j}\left(\hat{M}_{j,i}\geq 0 | M_{j,i}<0\right)-
    {\Pr}_{j}\left(\hat{M}_{j,i}\geq 0 | M_{j,i}>0\right)\right|$, we have
  \begin{align}
    &\frac{1/2}{1+\lfloor d/2\rfloor}\sum_{i=1}^{\lceil d/2 \rceil}\sum_{j=\lceil d/2 \rceil}^{d}\left|{\Pr}_{j}\left(\hat{M}_{j,i}\geq 0 | M_{j,i}<0\right)-
    {\Pr}_{j}\left(\hat{M}_{j,i}\geq 0 | M_{j,i}>0\right)\right|\notag\\
    &~~\leq \frac{\sqrt{2}}{1+\lfloor d/2 \rfloor}\sum_{i=1}^{\lceil d/2 \rceil}\sum_{j=\lceil d/2 \rceil}^{d}\sqrt{I(S;M_{j,i})}~=~ \sqrt{2}\lceil d/2 \rceil\frac{1}{\lceil d/2\rceil \left(1+\lfloor d/2 \rfloor\right)}\sum_{i=1}^{\lceil d/2 \rceil}\sum_{j=\lceil d/2 \rceil}^{d}\sqrt{I(S;M_{j,i})}\notag\\
    &~~\leq \sqrt{2}\lceil d/2 \rceil\sqrt{\frac{1}{\lceil d/2\rceil \left(1+\lfloor d/2 \rfloor\right)}\sum_{i=1}^{\lceil d/2 \rceil}\sum_{j=\lceil d/2 \rceil}^{d} I(S;M_{j,i})} \label{eq:commlem3}
  \end{align}
  where the last step is by Jensen's inequality (i.e. the average of square roots is upper bounded by the square root of the average). The expression in the
  square root equals the average mutual information between a random variable $S$
  (composed of at most $d^2/128$ bits), and $\lceil d/2\rceil \left(1+\lfloor d/2 \rfloor\right)$ binary random variables $M_{j,i}$, where $i\in \{1,\ldots,\lceil d/2\rceil\},j\in \{\lceil d/2 \rceil,\ldots,d\}$, which are all independent by construction. By
  Lemma 6 in \cite{shamir2013fundamental}, it is at most $(d^2/128)/\left(\lceil d/2\rceil \left(1+\lfloor d/2 \rfloor\right)\right) \leq 1/32$, so we have
  \[
   \sqrt{2}\lceil d/2 \rceil\sqrt{\frac{1}{\lceil d/2\rceil \left(1+\lfloor d/2 \rfloor\right)}\sum_{i=1}^{\lceil d/2 \rceil}\sum_{j=\lceil d/2 \rceil}^{d} I(S;M_{j,i})}\leq \sqrt{2}\lceil d/2\rceil\sqrt{\frac{1}{32}} = \frac{\lceil d/2 \rceil}{4}.
  \]
  Recalling this is an upper bound on \eqref{eq:commlem3},
  which is the second term in \eqref{eq:commlem2}, we get that
  \[
  \E_{M,j}\left[[M_j-\hat{M}_j]^2\right] \geq \frac{\lceil d/2 \rceil}{2}-\frac{\lceil d/2\rceil}{4} = \frac{\lceil d/2 \rceil}{4}\geq \frac{d}{8},
  \]
  hence justifying \eqref{eq:commlem1}.
\end{proof}

\end{document}